 \newtheorem{theorem}{Theorem}[section]
 \newtheorem{lemma}[theorem]{Lemma}
 \newtheorem{remark}{Remark}
 \newtheorem{defi}[theorem]{Definition}
\newcommand{\be}{\begin{equation}}
\newcommand{\ee}{\end{equation}}
\begin{document}

 \title{{\bf Gradient Estimation with Simultaneous Perturbation and Compressive Sensing} \\ \vspace{.1in} (Gradient Estimation)}

 \vspace{.2in}

\author{Vivek S.\ Borkar, Vikranth R. Dwaracherla, Neeraja Sahasrabudhe \footnote{Department of Electrical Engineering, Indian Institute of Technology Bombay, Powai, Mumbai 400076, India ((borkar.vs, vikranthacad, neeraja.sigma)@gmail.com). Authors listed alphabetically. Work of VSB supported in part by a J.\ C.\ Bose Fellowship and a grant `Approximation of high dimensional optimization and control problems' from Department of Science and Technology, Government of India.}}

 \vspace{.2in}

 \date{}

 \maketitle


\begin{abstract}%
 We propose a scheme for finding a \lq \lq good" estimator for the gradient of a function on a high-dimensional space with few function evaluations. Often such functions are not sensitive in all coordinates and the gradient of the function is almost sparse. We propose a method for gradient estimation that combines ideas from Spall's Simultaneous Perturbation Stochastic Approximation with compressive sensing. Applications to estimating gradient outer product matrix as well as standard optimization problems are illustrated via simulations.
\vspace{0.15in}

\noindent
{\it Keywords:} Gradient estimation; Compressive sensing; Sparsity; Gradient descent; Gradient outer product matrix.
\end{abstract}


\section{Introduction}\label{intro}
Estimating the gradient of a given function (with or without noise) is often an important part of  problems in reinforcement learning, optimization and manifold learning.  In reinforcement learning, policy-gradient methods are used to obtain an unbiased estimator for the gradient. The policy parameters are then updated with increments proportional to the estimated gradient \cite{PGM}. The objective is to learn a locally optimum policy. REINFORCE and PGPE methods (policy gradients with parameter-based exploration) are  popular instances of this approach (See \cite{NN} for details and comparisons, \cite{survey} for a survey on policy gradient methods in the context of actor-critic algorithms). In manifold learning, various finite difference methods have been explored for gradient estimation \cite{learninggrad}, \cite{learningrad1}. The idea is to use the estimated gradient to find the lower dimensional manifold where the given function actually lives. Optimization, i.e. finding maximum or minimum of a function, is a ubiquitous problem that appears in many fields wherein one seeks zeroes of the gradient. But the gradient itself might be hard to compute. Gradient estimation techniques prove particularly useful in such scenarios.

A further theoretical justification is facilitated by the results of \cite{TA}. In \cite{TA}, it was shown that given a connected and  locally connected metric probability space $(X, d, \mu)$ (i.e., $X$ is a compact metric space with metric $d$ and $\mu$ is a probability measure on the Borel $\sigma$-algebra of $(X, d)$), under suitable conditions, any function $f: X^n \mapsto \mathbb{R}$ is close (in $L^1(\mu^n)$) to a function $\int_X f(x_1, \cdots, y_i, \cdots, y_j, \cdots x_n)\cdots dy_i \ldots dy_j\cdots$ on a lower dimensional space. As a special case, a similar fact can be proved for real-values $1$-Lipschitz functions on $\mathbb{R}\setminus \mathbb{Z}$ with metric $| \cdot |_\infty^n$ (See Theorem 1.2 in \cite{TA}). This suggests that sparse gradients can be expected for functions on high dimensional spaces with adequate regularity conditions.

Over the years gradient estimation has also become an interesting problem in its own right. One would expect that the efficiency of a given method for gradient estimation also depends on the properties of function $f$. We consider one such class of problems in this paper. Suppose we have a continuously differentiable function $f: \mathbb{R}^n \mapsto \mathbb{R}$ where $n$ is large, such that the gradient $\nabla f$ lives mostly in a lower dimensional subspace. This means that one can throw out most of the coordinates of $\nabla f$ in a suitable local basis without incurring too much error. In this case, computing $\frac{\partial f}{\partial x_i} \; \forall i$ is clearly a waste of means. Dimensionality reduction techniques for optimization problems is an active area of research and several useful methods for this have been developed in the mathematical and engineering community \cite{HD1}, \cite{HD2}. If in addition the function evaluations are expensive, most gradient estimation methods become inefficient. Such is the case, e.g., if a single function evaluation is the output of a large time consuming simulation. This situation our specific focus. The problem of expensive function evaluations does not seem to have attracted much attention in machine learning literature. However, there has been quite a lot of work on optimization of functions with expensive evaluation. Most methods, however,  focus on learning a good surrogate for the original function (See \cite{BB3}, \cite{BB2}, \cite{BB1}).

To handle the first issue, ideas from compressive sensing can be applied. Compressive sensing theory tells us that an $s$-sparse vector can be reconstructed from $m \sim s \log (n/s)$ measurements. This means that one does not need the information about $\nabla f$ in all $n$ directions, a much smaller number of measurements would suffice. These ideas are frequently used in signal as well as image processing (see, e.g., \cite{image1, image2}). To remedy the latter difficulty, we use an idea from Simultaneous Perturbation Stochastic Approximation (SPSA) due to Spall \cite{spall}, viz., the Simultaneous Perturbation (SP).

We begin by explaining the proposed method for gradient estimation. Important ideas and results from compressive sensing and SPSA that are relevant to this work are discussed in section~\ref{CS} and section~\ref{GE} respectively. We state the main result in section~\ref{Main}. Section~\ref{App} consists of applications to manifold learning and optimization with simulated examples.

Some notational preliminaries are as follows. By $\| \cdot \|$ we denote the usual Euclidean norm in $\mathbb{R}^n$ as well as the Frobenius norm for matrices over $\mathbb{R}$. Throughout, `a.s.' stands for `almost surely', i.e., with probability one.


\section{Gradient Estimation: Combining Compressive Sensing and SP}

As mentioned above, if function evaluations are expensive, SP works well to avoid the problem of computing function multiple times. However, if the gradient is sparse it makes sense to use the ideas of compressive sensing to our advantage. Combining these two techniques helps us overcome the problem of too many function evaluations and also exploit the sparse structure of the gradient. The idea is to use SP to get sufficient number of  observations to be able to recover the gradient via $l_1$-minimization. We describe the method in detail in the following sub-sections.


\subsection{Compressive Sensing}\label{CS}

Assume that $\nabla f \in \mathbb{R}^n$ is an approximately sparse vector. The idea of compressive sensing is based on the fact that typically a sparse vector contains much less information or complexity than its apparent dimension. Therefore one should be able to reconstruct $\nabla f$ with considerable accuracy with much less information than that of order $n$. We will make these ideas more precise in the forthcoming discussion on compressive sensing. We state all the results for vectors in $\mathbb{R}^n$. All of these results also hold for vectors over $\mathbb{C}$. 
We start by defining what we mean by sparse vectors.
\begin{defi}[Sparsity]
The support of a vector $x \in \mathbb{R}^n$ is defined as:
$$ supp(x) := \{ j \in [n] : x_j \neq 0 \}. $$
where $[n] = \{ 1, 2, \ldots, n \}$. The vector $x \in \mathbb{R}^n$ is called $s$-sparse if at most $s$ of its entries are nonzero, i.e., if
$$ \|x\|_0 := card(supp(x)) \leq s. $$
\end{defi}

\

We assume that the observed data $y \in \mathbb{R}^m$ is related to the original vector $x \in \mathbb{R}^n$ via $Ax = y$ for some matrix $A \in \mathbb{R}^{m \times n}$, where $m < n$. In other words, we have a linear measurement process for observing $x$. The theory of compressive sensing tells us that if $x$ is sparse, then it can recovered from $y$ by solving a convex optimization problem. In particular, given a suitable matrix $A$ and appropriate $m$, the following $l_1$-minimization problem recovers $x$ exactly.
\begin{equation}
\min_{z \in \mathbb{R}^n} \| z \|_1 \; \; \mbox{subject to} \; \; y = Az
\end{equation}
where $y = Ax$ are the $m$ observations. These ideas were introduced by E. Candes and T. Tao in their seminal paper on near-optimal signal reconstruction \cite{CandesTao}. In this paper, the authors proved that the matrices suitable for the recovery need to have what is called the restricted isometry property (RIP). A large class of random matrices satisfy the RIP  with quantifiable `high probability' and are therefore suitable for reconstruction via $l_1$-minimization. In particular, subgaussian matrices have been shown to have RIP with high probability  and are suitable for the aforementioned reconstruction scheme for $m \sim s \log (n/s)$. This gives the explicit relationship between the sparsity level $s$, the dimension of the original vector $n$ and the dimension of the observed data $m$.
In recent times some work has been done to construct deterministic matrices with this restricted isometry property \cite{RIPdeter}. The current known lower bound on $m$ for deterministic matrices is of the order of $s^2$ where $s$ is the sparsity. Thus random matrices are a better choice for linear measurement for reconstruction via compressive sensing.

\

For the scope of this paper, we consider robust recovery options using Gaussian random matrices, i.e., matrices whose entries are realizations of independent standard normal random variables.

\begin{remark}
Matrices with more structure like random partial Fourier matrix or in general bounded orthonormal systems can also be used as meaurement matrices for compressive sensing techniques. Given a random draw of such a matrix with associated constant $K \geq 1$,  a fixed $s$-sparse vector $x$ can be reconstructed via $l_1$-minimization with high probability provided $m \geq CK^2 s \log n$. For more details on random sampling matrices in compressive sensing see chapter 12 of \cite{CSbook}.
\end{remark}

The crucial point here is that it is enough that the given vector is sparse in some basis. A more detailed discussion on various aspects of compressive sensing can be found in \cite{CSbook}. In real-life situations the measurements are almost always noisy. It may also happen that the original vector $x$ is not sparse but is close to a sparse vector. In other words, we would like the reconstruction scheme to be robust and stable. Theorem 9.13 of \cite{CSbook} gives explicit error bounds for stable and robust recovery where $A$ is a subgaussian matrix. The bound is expressed in terms of $\sigma_s(x) := \inf \{ \| x - z \|: z \in \mathbb{R}^n \ \mbox{is} \ s\mbox{-sparse} \}$, the distance of $x$ from the nearest $s$-sparse vector, and the measurement error. See \cite{StableCandesTao},\cite{error} and \cite{robustgaussian} for more on robust and stable recovery via compressive sensing. \\

We assume that our observations $y = (y_1, \ldots, y_m)$ are noisy. The following theorem gives an error bound on the reconstruction from noisy measurements using a Gaussian matrix.
\begin{theorem}[Theorem 9.20 in \cite{CSbook}] \label{robustgaussian}
Let $M \in \mathbb{R}^{m \times d}$ be a random draw of a Gaussian random matrix and $x \in \mathbb{R}^n$ be a $s$-sparse vector. Let $y = Mx + \xi$ be noisy measurements of $x$ such that $\| \xi \|_2 \leq \eta$. If for $0 < \epsilon <1$ and some $\tau > 0$,
\begin{equation} \label{mbound}
\frac{m^2}{m+1} \geq 2s \left( \sqrt{ \log(en/s)} + \sqrt{\frac{\log(\epsilon^{-1})}{s}} + \frac{\tau}{\sqrt{s}} \right)^2,
\end{equation}
then with probability at least $1 - \epsilon$ every $\hat{x}$ that minimizes  $\| z\|_1$ subject to $\| Mz - Mx \| \leq \eta$ approximates $x$ with $l_2$-error
$$ \|x - x^* \|_2 \leq \frac{2 \eta}{\tau}.$$
\end{theorem}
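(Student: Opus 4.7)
My plan is to follow the standard two-step template for compressive sensing recovery guarantees: first reduce the conclusion to a deterministic property of $M$, namely an $\ell_2$-robust null space property (NSP), and then verify that a Gaussian matrix satisfies this property with the claimed probability and sample-complexity scaling.

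For the deterministic step, I would work with an NSP of the form
\begin{equation*}
\|v_S\|_2 \leq \frac{\rho}{\sqrt{s}}\,\|v_{\bar{S}}\|_1 + \frac{1}{\tau}\,\|Mv\|_2 \quad \text{for all } v \in \mathbb{R}^n \text{ and all } |S| \leq s,
\end{equation*}
with some fixed $\rho < 1$. Setting $h = x^* - x$ and $S = \text{supp}(x)$, feasibility of $x$ in the $\ell_1$-program together with optimality of $x^*$ gives $\|x^*\|_1 \leq \|x\|_1$, and the standard cone argument on $S$ yields $\|h_{\bar{S}}\|_1 \leq \|h_S\|_1$. Combining this with the NSP applied to $v = h$ and the triangle inequality $\|Mh\|_2 \leq \|Mx^* - y\|_2 + \|y - Mx\|_2 \leq 2\eta$ delivers the bound $\|h\|_2 \leq 2\eta/\tau$, with the factor $2$ accounting precisely for the doubling from measurement error.

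For the probabilistic step, the workhorse is Gordon's escape-through-a-mesh theorem: for Gaussian $M$ and $T \subseteq \mathbb{S}^{n-1}$,
\begin{equation*}
\inf_{u \in T} \|Mu\|_2 \geq \mathbb{E}\|g\|_2 - \ell(T) - t
\end{equation*}
holds with probability at least $1 - e^{-t^2/2}$, where $g \sim N(0, I_m)$ and $\ell(T)$ denotes the mean width. I would apply this to the set $T_{\rho,s}$ of unit vectors violating the NSP inequality for some support of size $s$, whose Gaussian width satisfies $\ell(T_{\rho,s}) \leq \sqrt{2s\,\log(en/s)}$ (the sharp width estimate for the set of effectively $s$-sparse vectors). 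The lower bound $\mathbb{E}\|g\|_2 \geq \sqrt{m^2/(m+1)}$, derived from the explicit $\chi_m$-moment formula, is exactly what introduces the factor $m^2/(m+1)$ in \eqref{mbound}. Taking $t = \sqrt{2\log(\epsilon^{-1})}$ to control the failure probability by $\epsilon$ and requiring the residual infimum to exceed $\tau$ produces the displayed condition \eqref{mbound} upon squaring.

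The main obstacle is pinning down the Gaussian width bound with the correct $\log(en/s)$ scaling rather than a cruder $\log n$: the naive estimate from a $\binom{n}{s}$ union bound over supports, combined with the $\sqrt{s}$ width of each coordinate ball, needs to be refined either by a dual-Sudakov type computation or by directly analyzing the expected supremum of the Gaussian process indexed by effectively sparse vectors. The rest is bookkeeping, matching the NSP parameters $(\rho, \tau)$ to the final error constant $2$ and aligning the $\tau/\sqrt{s}$ slack term in \eqref{mbound} with the gap that Gordon's inequality demands.
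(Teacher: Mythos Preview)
The paper does not prove this theorem at all: it is quoted verbatim as Theorem~9.20 of Foucart--Rauhut \cite{CSbook} and used as a black box in the proof of Theorem~\ref{mainthm}. So there is no ``paper's own proof'' to compare against.

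That said, your sketch is correct and is essentially the argument given in \cite{CSbook} for Theorem~9.20: the deterministic reduction to an $\ell_2$-robust null space property, followed by Gordon's escape-through-a-mesh inequality applied to the set of effectively $s$-sparse unit vectors, with the sharp width bound $\sqrt{2s\log(en/s)}$ and the moment estimate $\mathbb{E}\|g\|_2 \geq m/\sqrt{m+1}$ producing exactly the $m^2/(m+1)$ and the three-term structure inside the square in~\eqref{mbound}. One small point: in your deterministic step the cone constraint $\|h_{\bar S}\|_1 \leq \|h_S\|_1$ alone does not immediately yield $\|h\|_2 \leq 2\eta/\tau$ from the NSP with an arbitrary $\rho<1$; you need to chase the constants through (the NSP gives $\|h_S\|_2 \leq \rho\|h_S\|_2 + \|Mh\|_2/\tau$ after using $\|h_{\bar S}\|_1 \leq \sqrt{s}\|h_S\|_2$, and then control $\|h_{\bar S}\|_2$ separately), so the clean factor~$2$ requires a specific choice of $\rho$ or the slightly different formulation used in \cite{CSbook}. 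This is bookkeeping, as you say, but worth spelling out if you actually write the proof.
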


See Theorem 9.29 in \cite{CSbook} for a statement for stable and robust recovery via Gaussian matrices.

\subsection{Simultaneous Perturbation Stochastic Approximation} \label{GE}

As discussed above we have a fairly good reconstruction of a sparse gradient $\nabla f$ given a sufficient number of observations $\{y_i\}$. However, as mentioned before, the problem often is the unavailability of these observations. Even though observations for $\nabla f$ are not readily available, one may compute $y_i$'s using the available information, that is, noisy measurements of the function $f$. Note that we have, however, assumed that the function evaluations are computationally expensive. We will now address this issue of estimating $\nabla f$ with low computational overheads. \\

Let $e_i$ denote the $i^{th}$ coordinate direction for $1 \leq i \leq n$. We consider the finite difference approximation
$$ \frac{\partial f(x(k))}{\partial x_i} \approx \frac{f(x(k)+ \delta e_i) - f(x(k) - \delta e_i)}{2 \delta} $$
where $x(k) = (x_1(k), \ldots, x_n(k))$ and $\delta > 0$. By Taylor's theorem, the error of estimation is $O(\delta \| \nabla^2 f(x(k))\|)$ where $\nabla^2 f$ denotes the Hessian. This estimate requires $2n$ function evaluations. Replacing the `two sided differences' $(f(x(k)+ \delta e_i) - f(x(k) - \delta e_i))/2$ above by `one sided differences' $(f(x(k)+ \delta e_i) - f(x(k)))$  reduces this to $n + 1$, which is still large for large $n$. Given that we have assumed $f$ to be such that the function evaluations are computationally expensive, an alternative method is desirable. We use the method devised by Spall \cite{spall} in the context of stochastic gradient descent, known as Simultaneous Perturbation Stochastic Approximation (SPSA).  \\

Recall the stochastic gradient descent scheme \cite{book}
\begin{equation}
x(k+1) = x(k) + a(k)\left[-\nabla f(x(k)) + M(k+1)\right], \label{SAbasic}
\end{equation}
where:
\begin{itemize}
\item  $\{M(k)\}$ is a square-integrable martingale difference sequence, viz., a sequence of zero mean random variables with finite second moment satisfying
$$
E\left[M(k+1) | x(m), M(m), m \leq k\right] = 0 \ \forall \ k \geq 0,
$$
i.e., it is uncorrelated with the past. We assume that it also satisfies
\begin{equation}
\sup_kE\left[\|M(k+1)\|^2 | x(m), M(m), m \leq k\right] < \infty, \label{mgbdd}
 \end{equation}

  \item $\{a(k)\}$ are  step-sizes satisfying
  \begin{equation}
  a(k) > 0 \ \forall k, \ \sum_k a(k) = \infty, \ \sum_k a(k)^2 < \infty. \label{steps}
   \end{equation}
   \end{itemize}

The term in square bracket in (\ref{SAbasic}) stands for a  noisy measurement of the gradient. Under mild technical conditions, $x(k)$ can be shown to converge a.s.\ to a local minimum of $f$ \cite{book}. The idea is that the incremental adaptation due to the slowly decreasing step-size $a(k)$ averages out the noise $\{M(k)\}$, rendering this a close approximation of the classical gradient descent with vanishing error \cite{book}. In practice the noisy gradient is often unavailable and one has to use an approximation $\widehat{\nabla f}$ thereof using noisy evaluations of $f$, e.g., the aforementioned finite difference approximations, which lead to the Kiefer-Wolfowitz scheme. That is where the SP scheme comes in. We describe this next.\\

Let $\{\Delta_i(k), 1 \leq i \leq n, k \geq 0\}$ be i.i.d.\ zero mean random variables such that
\begin{itemize}
\item $\Delta(k) = (\Delta_1(k), \ldots \Delta_n(k))$ is independent of $M(\ell), \ell \leq k + 1$.
\item $P(\Delta_i(k) = 1) = P(\Delta_i(k) = -1) = 1/2$.
\end{itemize}

Then by Taylor's theorem, we have that for $\delta>0$:
\begin{equation} \label{spall}
\frac{f(x(k) + \delta \Delta(k)) - f(x(k))}{\delta \Delta_i(k)} \approx \frac{\partial f}{\partial x_i}(x(k)) + \sum_{j \neq i}\frac{\partial f}{\partial x_i}(x(k)) \frac{\Delta_j}{\Delta_i}.
\end{equation}
Note that since $\Delta_j$'s are i.i.d.\ zero mean random variables, we have for $j \neq i$,
$$\mathbb{E} \left[ \frac{\partial f}{\partial x_i}(x(k)) \frac{\Delta_j}{\Delta_i} \Big | x(m), M(m), m \leq k-1 \right] = 0.$$
Hence for the purpose of stochastic gradient descent, the second term in \eqref{spall} acts as a zero mean noise (i.e., martingale difference) term that can be clubbed with $M(k+1)$ as martingale difference noise and gets averaged out by the iteration. This serves our purpose, since the above scheme requires only two function evaluations per iterate given by
$$x_i(k+1) = x_i(k) + a(k)\left[ - \frac{f(x(k) + \delta \Delta(k)) - f(x(k))}{\delta \Delta_i(k)}\right] + M_i(k+1).$$
Our idea is to generate $\widetilde{\nabla f}$ according to the scheme discussed above.\\

It should be mentioned that Spall also introduced another approximation based on a single function evaluation (see, e.g., \cite{book}, Chapter 10). But this suffers from numerical issues due to the `small divisor' problem, so we do not pursue it here.\\



\subsection{Main result}\label{Main}
As mentioned in the introduction, the idea is to combine the SP and compressive sensing to obtain a sparse approximation of $\nabla f$. Note that while SP gives an estimate with zero-mean error, the final estimate of gradient obtained after compressive sensing may not be unbiased. To avoid the error from piling up we need to average out the error at SP stage. We propose the following algorithm for estimating gradient of $f$.   Let $a_i$ denote the row vectors of $A$.\\

\begin{algorithm}[H] \label{GEA}
\caption{Gradient Estimation at some $x \in \mathbb{R}^n$ with SP and Compressive Sensing}\label{basic}
{\bf Initialization:}
\\ $A = (a_{ij})_{m \times n} \gets$ random Gaussian matrix. \\
\begin{algorithmic}
\State $ \bullet \ y_i = \frac{f(x + \delta \sum\limits_{j}^m \Delta_j a_j) - f(x)}{\delta \Delta_i}$ \ \ for \ $i =1, \ldots, m$.

\

\State $\bullet$ Repeat and average over $y_i$'s $k$ times to get $\bar{y}_i$.

\

\State $\bullet \ y = (\bar{y}_1, \ldots, \bar{y}_m)= A{\nabla f}(x) + \eta$ where $\eta$ denotes the error.

\

\State $\bullet$ Solve the $l_1$-minimization problem to obtain $\widetilde{\nabla f}$: \\
$$ \mbox{minimize} \ \| z \|_1 \ \ \mbox{subject to} \ \ \|Az = y \| \leq \eta $$
\end{algorithmic}
\vspace{0.05in}
\bf Output: estimated gradient $\widetilde{\nabla f}(x)$.
\end{algorithm}

\noindent The following theorem states that with high probability such an approximation is \lq \lq close" to the actual gradient.

\begin{theorem} \label{mainthm}
Let $f: \mathbb{R}^n \mapsto \mathbb{R}$ be a continuously differentiable function with bounded sparse gradient. Then for $m \in \mathbb{N}$  such that it satisfies the bound in~\eqref{mbound}, $0 < \epsilon << \frac{1}{m}$, and given $\delta > 0$ (as in \eqref{spall}) and $\tau>0$ (as in Theorem~\ref{robustgaussian}), $\nabla f$ can be estimated by a sparse vector $\widetilde{\nabla f}$ such that with probability at least $1 - \epsilon m$,
$$\| \widetilde{\nabla f} - \nabla f \| < \frac{2t}{\tau}.$$
where, $t > 2m O(\delta)$.
\end{theorem}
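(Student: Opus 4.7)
The plan is to write the measurement vector $y$ produced by the algorithm as $A\nabla f(x) + \eta$ with $\|\eta\|_2 \le t$ holding on a high-probability event, and then invoke Theorem~\ref{robustgaussian} to conclude $\|\widetilde{\nabla f} - \nabla f(x)\|_2 \le 2t/\tau$ on that same event. Since the hypothesis~\eqref{mbound} on $m$ is already in force, the compressive sensing guarantee can be treated essentially as a black box; the real content of the proof is showing that the SPSA measurements, after $k$-fold averaging, genuinely behave like noisy linear measurements with a sufficiently small noise vector.

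To establish the decomposition, I would start with a Taylor expansion of $f(x + \delta \sum_j \Delta_j a_j)$ about $x$, then divide by $\delta \Delta_i$ (using $1/\Delta_i = \Delta_i$) to obtain
\[
y_i \;=\; \langle a_i, \nabla f(x)\rangle \;+\; \sum_{j \ne i} \Delta_i \Delta_j \langle a_j, \nabla f(x)\rangle \;+\; b_i(\delta),
\]
where $b_i(\delta) = O(\delta)$ is the (deterministic-in-magnitude) Taylor remainder. The first term is exactly $(A\nabla f(x))_i$; the second is the SPSA-style cross term, which is zero-mean conditional on $(A, \Delta_i)$ by the symmetry and independence of the $\Delta_j$'s, exactly as in Section~\ref{GE}. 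Averaging over $k$ independent draws of $\Delta$ contracts the cross term by $1/\sqrt{k}$, and a Hoeffding/sub-Gaussian bound (conditioning first on $A$, and using that $\langle a_j, \nabla f(x)\rangle \sim \mathcal{N}(0, \|\nabla f(x)\|^2)$) shows that for $k$ large enough the averaged cross term is bounded in absolute value by $O(\delta)$ with probability at least $1 - \epsilon$. A union bound across the $m$ coordinates gives $|\bar{y}_i - (A\nabla f(x))_i| \le 2\,O(\delta)$ for all $i$ simultaneously with probability at least $1 - m\epsilon$, whence $\|\eta\|_2 \le \sqrt{m} \cdot 2\,O(\delta) < 2m \cdot O(\delta) < t$, matching the hypothesis of Theorem~\ref{robustgaussian}.

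With the bound on $\|\eta\|_2$ in hand, Theorem~\ref{robustgaussian} applied to the Gaussian matrix $A$ and the sparse vector $\nabla f(x)$ yields $\|\widetilde{\nabla f} - \nabla f(x)\|_2 \le 2t/\tau$ on an event of probability at least $1 - \epsilon$ coming from the matrix draw; intersecting this with the union-bound event and absorbing the additional $\epsilon$ via the hypothesis $\epsilon \ll 1/m$ delivers the claimed success probability of at least $1 - \epsilon m$. The main obstacle is the concentration step: the cross-term noise mixes Rademacher randomness from $\Delta$ with the Gaussian randomness of $A$, and the bound must hold uniformly across the $m$ coordinates after only $k$ averagings. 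The cleanest route is to condition on $A$ and apply Hoeffding to the Rademacher sum, and then to use a $\chi^2$-type concentration to control $\sum_{j \ne i}\langle a_j, \nabla f(x)\rangle^2$; this forces $k$ to grow polynomially in $m$ and logarithmically in $m/\epsilon$, which the algorithm can accommodate.
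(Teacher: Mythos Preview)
Your approach is essentially that of the paper: Taylor-expand to write $y_i$ as $(A\nabla f)_i$ plus a Rademacher cross term plus an $O(\delta)$ remainder, apply Hoeffding to the $k$-averaged cross term, union-bound over the $m$ coordinates to control $\|\eta\|$, and then invoke Theorem~\ref{robustgaussian}. You are in fact slightly more careful than the paper in two places---the paper silently treats $C=\sup_i|\langle a_i,\nabla f\rangle|$ as a deterministic constant and does not explicitly intersect the Hoeffding event with the compressive-sensing success event---so your $\chi^2$-concentration step and your final probability bookkeeping are refinements rather than deviations.
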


\begin{proof}
Let $A \in \mathbb{R}^{m \times n}$ be a Gaussian matrix such that $m$ satisfies~\eqref{mbound}. Then, following the same idea as in~\eqref{spall}, we have:
\begin{eqnarray} \label{obs}
y_i &=& \frac{f(x + \delta \sum\limits_{j=1}^m \Delta_j a_j) - f(x)}{\delta \Delta_i} \nonumber \\
&=& \langle \nabla f(x), a_i \rangle + \sum_{j \neq i} \frac{\Delta_j \langle \nabla f, a_i \rangle}{\Delta_i} + O(\delta).
\end{eqnarray}
So we get
\begin{equation} \label{error}
y = A \nabla f + \ \mbox{`error'},
\end{equation}
where we quantify the `error' below.

\

The above computation is carried out $k$ times independently, keeping the matrix $A$ fixed and choosing the random vector $\Delta$ according to the distribution defined in section~\ref{GE}. The reason for this additional averaging is as follows. The reconstruction in compressive sensing need not give an unbiased estimate, since it performs a nonlinear (minimization) operation. Thus it is better to do some pre-processing of the SP estimate (which is nearly, i.e., modulo the $O(\delta)$ term, unbiased) to reduce its variance. We do so by repeating it $k$ times with independent perturbations and taking its arithmetic mean. This may seem to defeat our original objective of reducing function evaluations, but the $k$ required to get reasonable error bounds is not large as our analysis shows later, and the computational saving is still significant (see `Remark \ref{rem}' below). \\

Denote by $y^l$ the measurement obtained at $l^{th}$ iteration of SP. The error for a single iteration is given by
\begin{equation*}
\eta^l = \left(\sum_{j \neq 1} \frac{\Delta^l_j \langle \nabla f, a_1 \rangle}{\Delta^l_1} + O(\delta), \ldots, \sum_{j \neq m} \frac{\Delta^l_j \langle \nabla f, a_m \rangle}{\Delta^l_m} + O(\delta)\right).
\end{equation*}
Denote by $X_{ij}^l = \frac{\Delta^l_j \langle \nabla f, a_i \rangle}{\Delta^l_i}$. So, $X_{ij}^l$ are zero-mean conditionally (given past iterates) independent random variables.

\

The error vector after $k$ iterations is given by
\begin{eqnarray}
\eta & = & \frac{1}{k} \sum\limits_{l = 1}^k \eta^l \nonumber \\
& =&  \frac{1}{k} \sum\limits_{l = 1}^k \left( \sum_{j \neq 1} X_{1j}^l + O(\delta), \ldots, \sum_{j \neq m} X_{mj}^l + O(\delta) \right) \nonumber \\
& = & \left( \frac{1}{k} \sum\limits_{l = 1}^k \sum_{j \neq 1} X_{1j}^l + O(\delta), \ldots, \frac{1}{k} \sum\limits_{l = 1}^k \sum_{j \neq m} X_{mj}^l + O(\delta) \right). \label{etaerror}
\end{eqnarray}

 In order to apply the ideas from compressive sensing as in Theorem~\ref{robustgaussian}, we need to have a bound on the error $\| \eta \|$. This is obtained as follows. Let $K > 0$ be a constant such that the $O(\delta)$ term above is bounded in absolute value by $K\delta$. $K$ can, e.g., be a bound on $\|\nabla^2 f\|\|A\|$ by the mean value theorem, where we use the Frobenius norm.  Choose $C \geq \sup| \langle \nabla f, a_i \rangle |$ and $t > 2mK\delta$. Then, by Hoeffding's inequality we have,
\begin{eqnarray}
P(\|\eta \| \geq t)& \leq &  \sum\limits_{i=1}^m P\left( \Big| \frac{1}{k}\sum\limits_{l = 1}^k \sum_{j \neq i} X_{ij}^l + O(\delta) \Big| > t/m \right)  \nonumber \\
& \leq & \sum\limits_{i=1}^m P\left( \Big| \sum\limits_{l = 1}^k \sum_{j \neq i} X_{ij}^l \Big| > kt/2m \right)  \nonumber \\
& \leq & 2 m e^{-\frac{k t^2}{2 m^2(m-1)C^2}}.
\end{eqnarray}

Choose the number of iterations, $k > \frac{2 m^3 C^2}{t^2} \log \left( \frac{2}{\epsilon} \right)$. Then,
\begin{equation}\label{errorbound}
P(\|\eta \| \geq t) \leq  \epsilon m.
\end{equation}
We define  $\widetilde{\nabla f}$ to be the reconstruction of the gradient using $m$ measurements. That is, $\widetilde{\nabla f}$ solves the following optimization problem:
\begin{equation*}
\min_{z \in \mathbb{R}^n} \| z \|_1 \; \; \mbox{subject to} \; \; \; \; \| Az - y \| \leq \eta
\end{equation*}
where $y$ is as in \eqref{error}. Our claim then follows from bound in \eqref{errorbound} and Theorem~\ref{robustgaussian}.
\end{proof}

\begin{remark} \label{rem}
Note that the minimum number of iterations of SP required to obtain a \lq \lq good" estimate of $\nabla f$ is given by
\begin{eqnarray*}
k &>& \frac{2m^3C^2}{t^2}\log\left(\frac{2}{\epsilon}\right) \\
&\geq& \frac{mC^2}{2K^2\delta^4}\log\left(\frac{2}{\epsilon}\right) \\
&\geq& \frac{s\tilde{C}}{\delta^4}\log\left(\frac{n}{s}\right)\log\left(\frac{2}{\epsilon}\right).
\end{eqnarray*}
for a suitable constant $\tilde{C}$.
\end{remark}


The above $\widetilde{\nabla f}$ can now be used as an effective gradient in various problems involving gradients of high-dimensional functions. Two such applications are discussed in the next section.


\section{Applications} \label{App}
We consider the applications of our method to manifold learning and optimization problems. The gradient estimates obtained using our method can be used to estimate the gradient outer product matrix or can be plugged into an optimization scheme. In the former case, along with an example, we also provide error bounds on the estimated and actual gradient outer product matrix. For the latter case, we look at an example and provide suitable modifications to existing algorithms to achieve faster convergence. Algorithm 1 below which is based on Theorem~\ref{mainthm}, is used for gradient estimation.

There are various algorithms available for carrying out the $l_1$-minimization. A detailed discussion of these algorithms can be found in \cite{Algos}, Chapter 15 of \cite{CSbook}. Here we use the homotopy method.
\begin{algorithm}[H] \label{GEA}
\caption{Gradient Estimation at some $x \in \mathbb{R}^n$ with SP and Compressive Sensing}\label{basic}
{\bf Initialization:}
\\ $A = (a_{ij})_{m \times n} \gets$ random Gaussian matrix. \\
\begin{algorithmic}
\State ${y}(n) \ \  \gets A{\nabla f}(x(n))$ + error as obtained in equation~\eqref{error}.
\vspace{0.02in}
\State $\widetilde{\nabla f}(x) \gets l_1$ minimization using $Homotopy(y,A)$.
\end{algorithmic}
\vspace{0.05in}
\bf Output: estimated gradient $\widetilde{\nabla f}(x)$.
\end{algorithm}
\noindent Here $Homotopy(y, A)$ denotes the $l_1$-recovery from observations $y$ and Gaussian random matrix $A$ using the homotopy method.
All the simulations were performed on MATLAB using the available toolbox for $l_1$-minimization (Berkeley database: http://www.eecs.berkeley.edu/~yang/software/ l1benchmark/).

Consider a function $f: \mathbb{R}^{25000} \mapsto \mathbb{R}$ given by $f(x) = x^{T} MM^T x$ where, $M$ is $25000\times 3$-dimensional matrix with $3$ non-zero elements per row. Let $A$ be a random Gaussian matrix that is used for measurement. We consider $m = 50$ measurements.

\

Figure~\ref{fig:sim} shows the performance of the proposed method with varying number of SP iterations.

\begin{figure}[H]
\centering
\includegraphics[width = 0.7\textwidth]{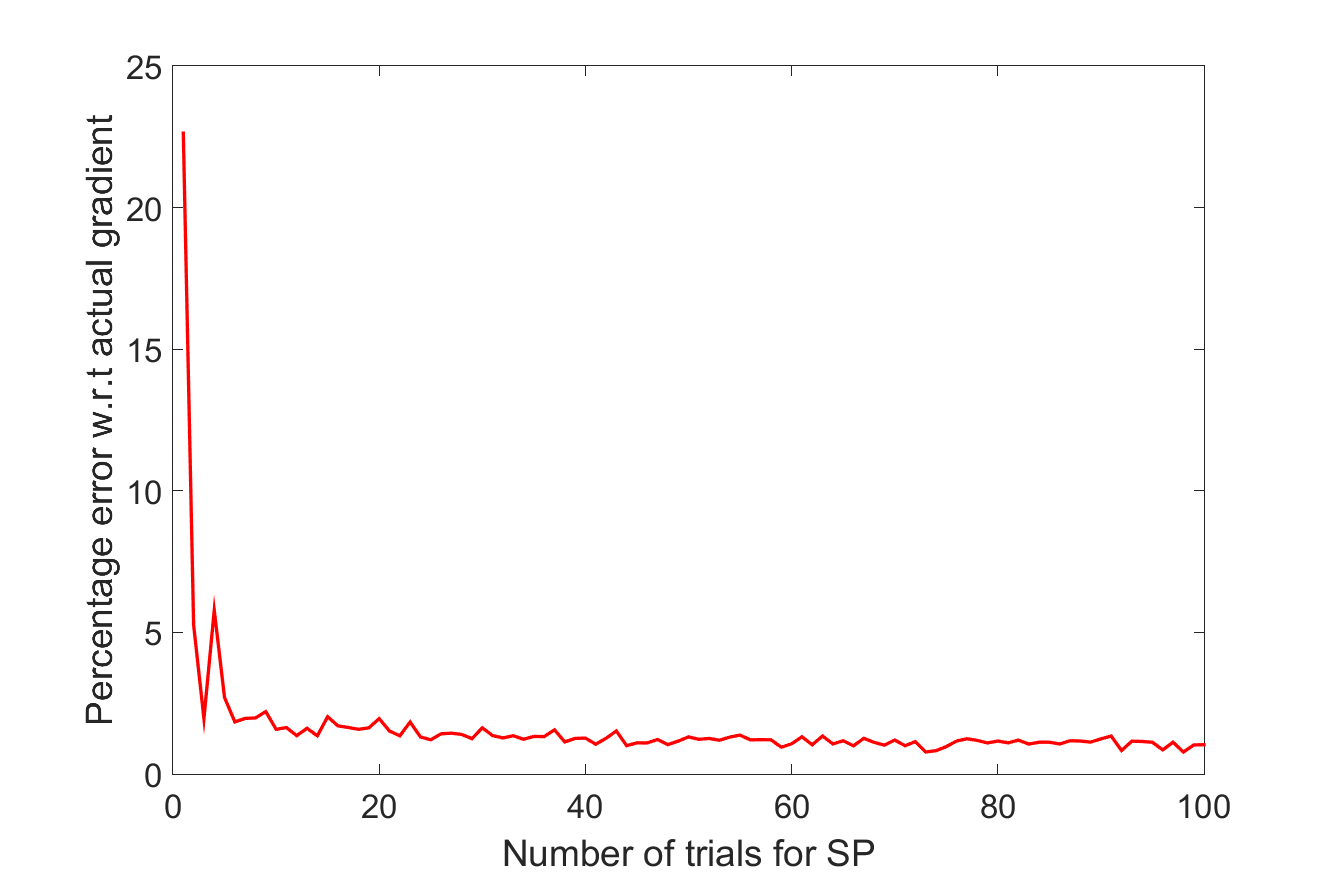}
\caption{Percentage error of $\| \nabla f - \widetilde{\nabla f} \|$ with varying number of iterations $k$ for SP.}\label{fig:sim}
\end{figure}

Figure~\ref{fig:comp1} and \ref{fig:comp1blown} show the comparison between our method and naive SP for estimating gradient with gradually increasing number of iterations for averaging over SP (The quantity `$k$' in (\ref{etaerror})). As mentioned earlier, since the gradient is assumed to be sparse, using naive SP to compute derivative in each direction seems wasteful. Although the error diminishes as the number of iterations for SP increase, the proposed method combining compressive sensing with SP consistently performs better.

\

Figures~\ref{fig:comp2} and \ref{fig:comp2blown} show that the proposed method works well with higher sparsity levels too. It also shows that with higher $s$, performance of naive SP improves. This is expected. The extremely high error in the naive SP method (especially for small $s$) is owing to the fact that the actual gradient is extremely sparse and in the beginning SP method ends up populating almost all the coordinates. That contributes to the high percentage of error as seen in the aforementioned figures.

\begin{figure}[!tbp]
  \centering
  \begin{minipage}[b]{0.4\textwidth}
    \includegraphics[width=\textwidth]{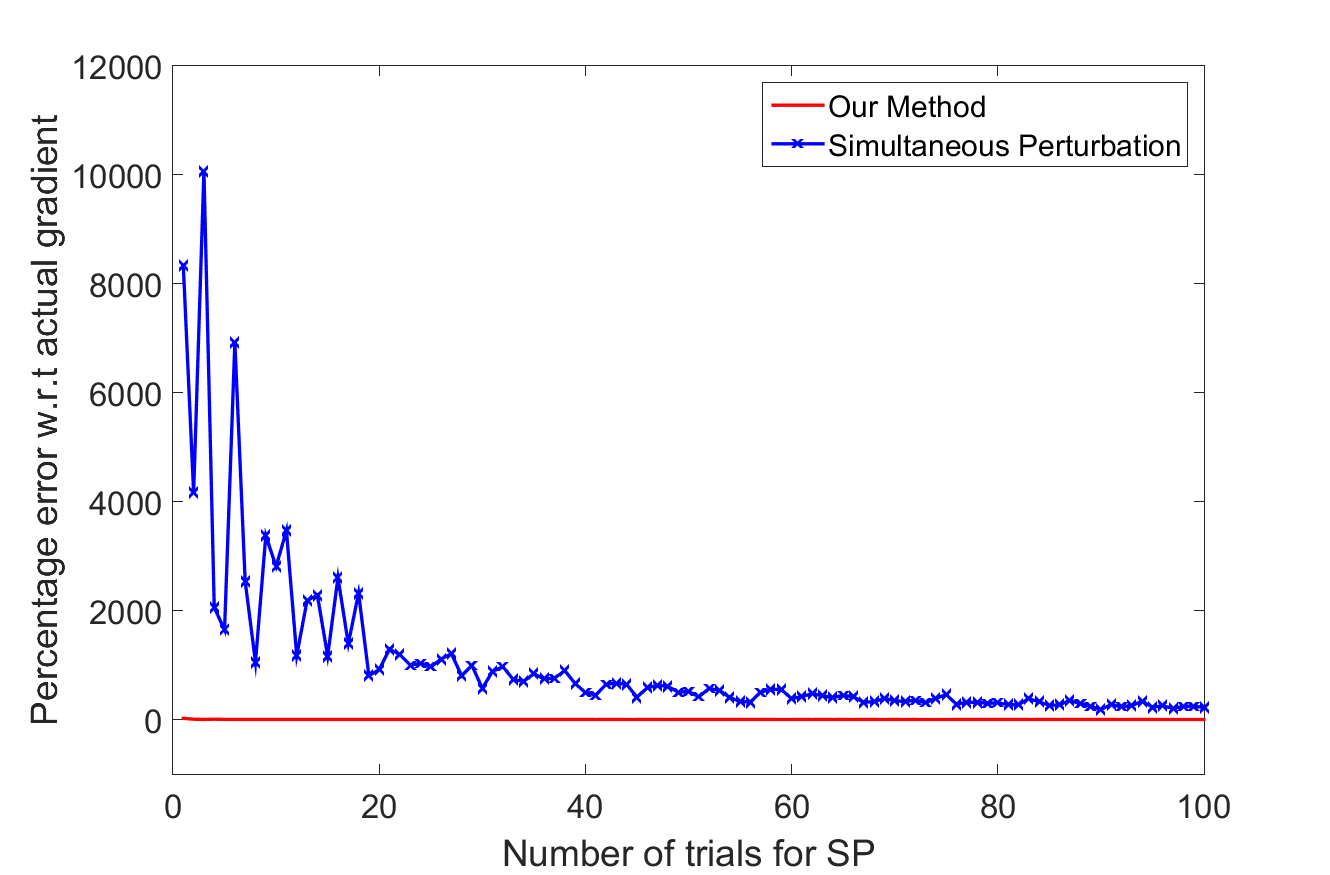}
    \caption{Performance of the proposed algorithm vs. the SP method with varying number iterations $k$ at SP step.} \label{fig:comp1}
  \end{minipage}
\hfill
  \begin{minipage}[b]{0.4\textwidth}
    \includegraphics[width=\textwidth]{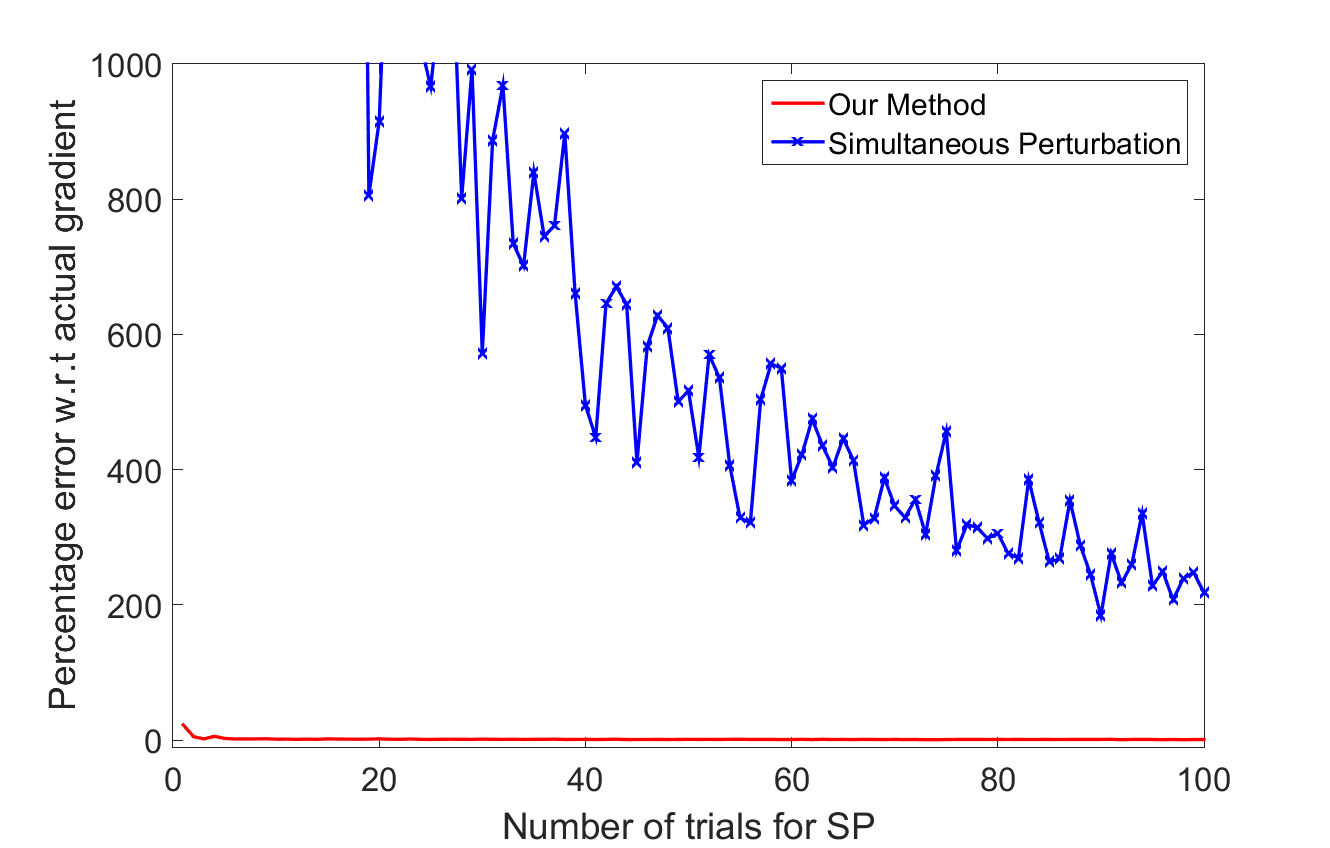}
    \caption{A closer look at Figure~\ref{fig:comp1} : Performance of the proposed algorithm vs. the SP method with varying number iterations $k$ at SP step.}\label{fig:comp1blown}
  \end{minipage}
\end{figure}


\begin{figure}[!tbp]
  \centering
  \begin{minipage}[b]{0.4\textwidth}
    \includegraphics[width=\textwidth]{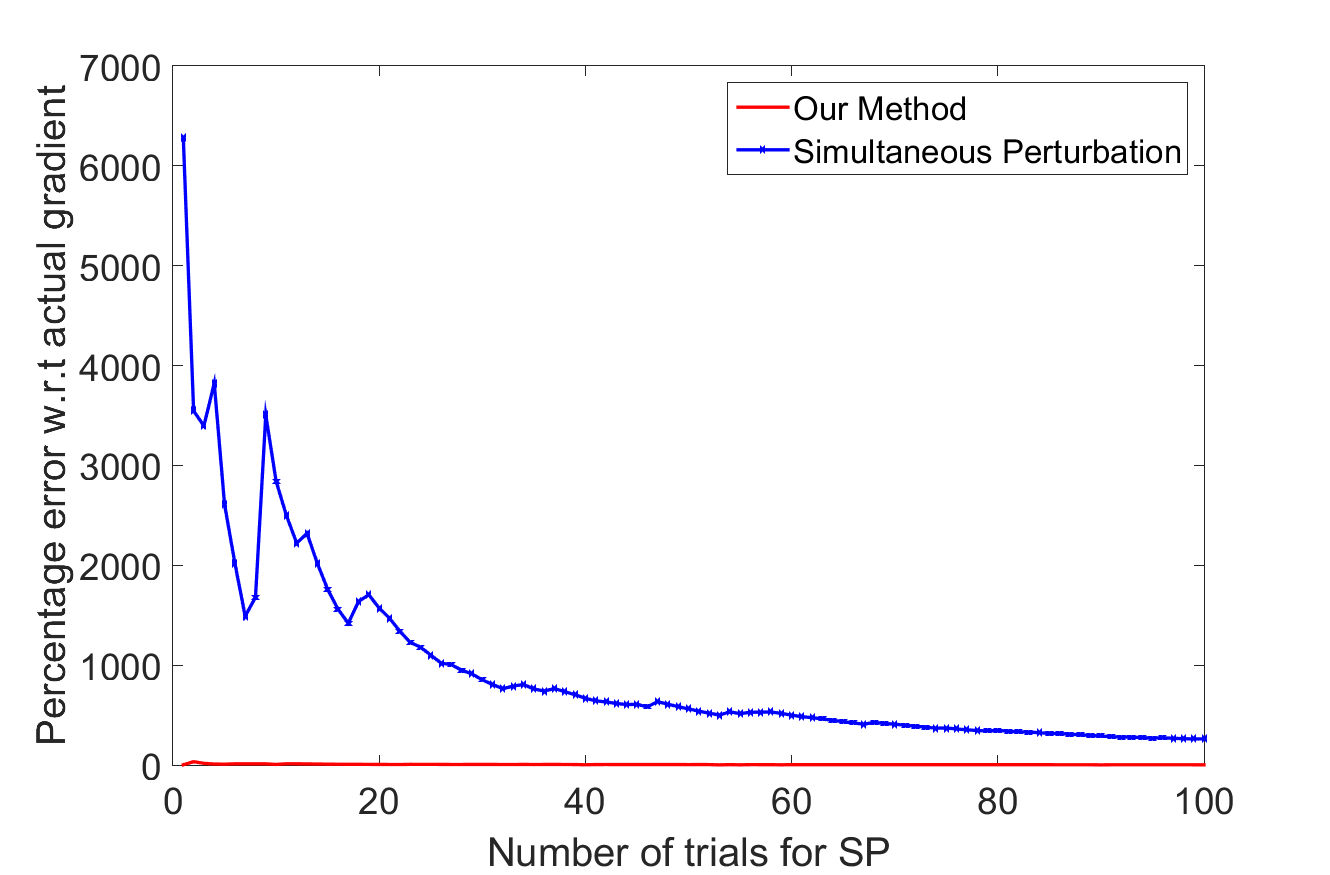}
    \caption{Performance of the proposed algorithm vs. the SP method with varying number iterations $k$ at SP step. Here $s=50$ and $m=500$.} \label{fig:comp2}
  \end{minipage}
\hfill
  \begin{minipage}[b]{0.4\textwidth}
    \includegraphics[width=\textwidth]{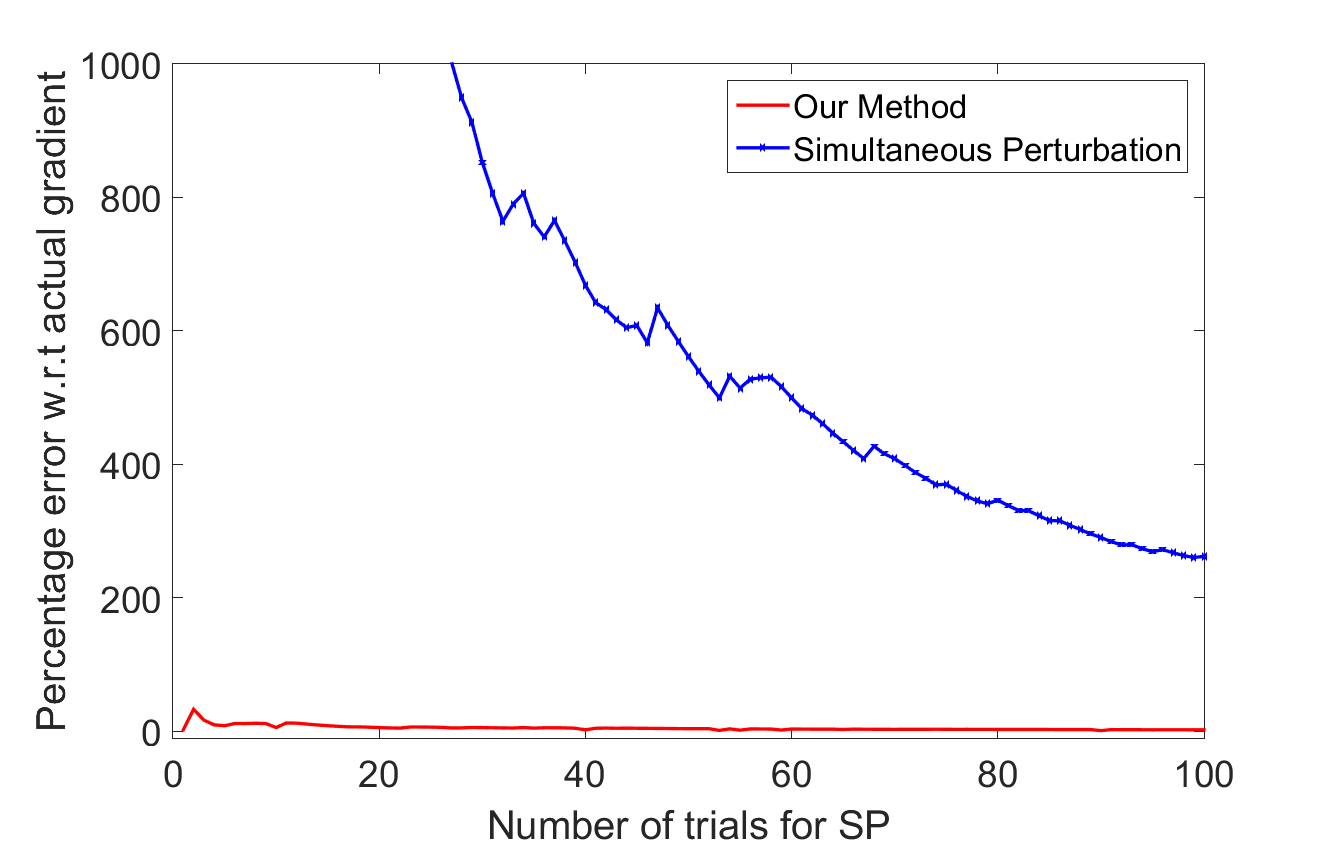}
    \caption{A closer look at Figure~\ref{fig:comp2}: Performance of the proposed algorithm vs. the SP method with varying number iterations $k$ at SP step. Here $s=50$ and $m=500$.} \label{fig:comp2blown}
  \end{minipage}
\end{figure}

Before we consider specific applications, we illustrate how the percentage error of estimated gradient with varying $k$ for different sparsity levels $s$. For appropriately large $m$, for small $k$ the error is high (this matches with the discussion in Remark~\ref{rem}). As $k$ increases the error is much less. As long as $m$ satisfies \ref{mbound}, the compressive sensing results apply. Figure~\ref{fig : s} shows the behaviour of the proposed method with variation in the sparsity, but with constant number of observations. We consider $10000$-dimensional vector with $m=50$ observations. As expected, for a fixed $m$, as the sparsity increases, increasing $k$ no longer helps as the compressive sensing results do not apply and the error increases. $f(x) =x(1)^2 + \ldots + x(s)^2$ was used as a test function for both the simulations.

\begin{figure}[H] \centering \includegraphics[width = 0.7\textwidth]{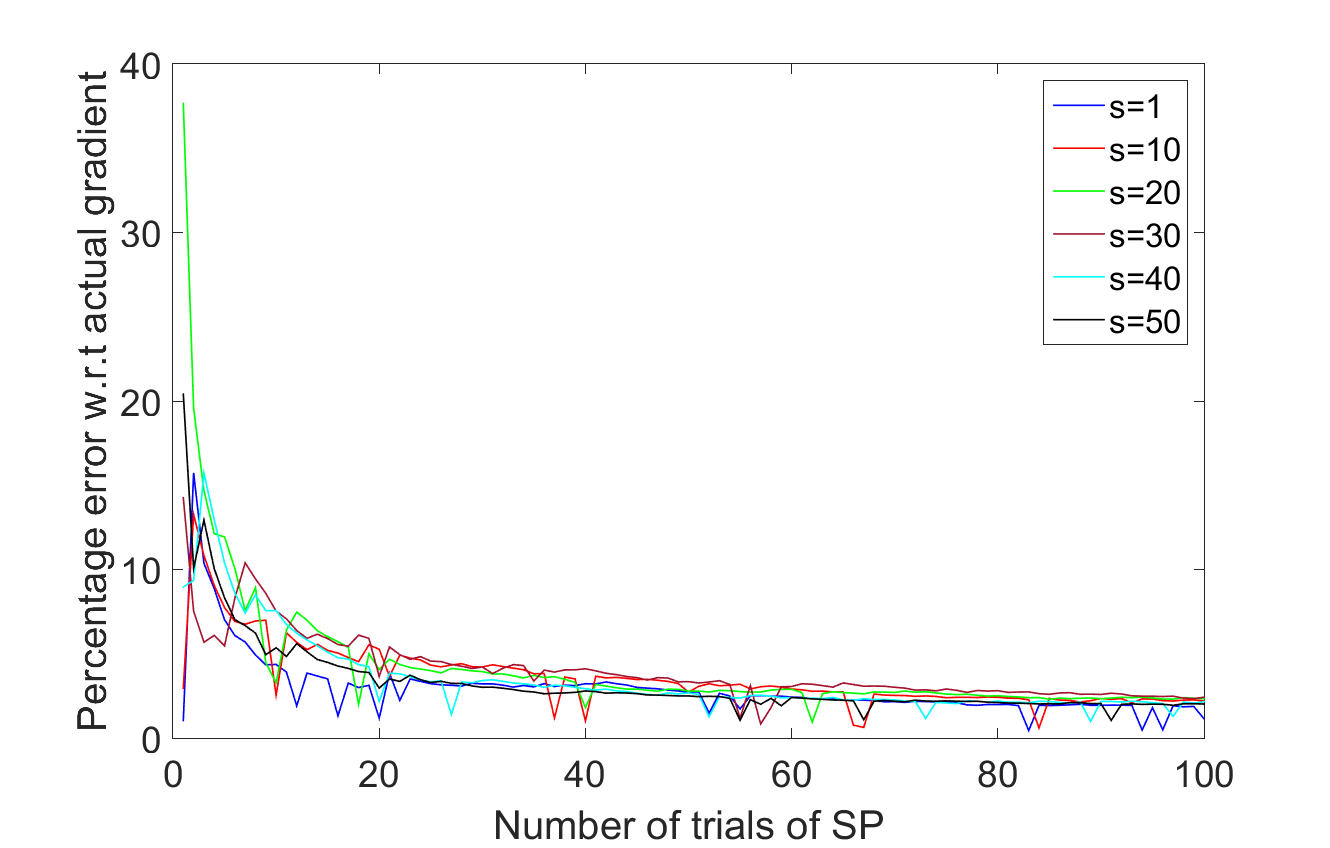} \caption{Performance of the proposed algorithm with variation $k$ for different sparsity levels.}\label{fig : s} \end{figure}

\begin{figure}[H] \centering \includegraphics[width = 0.7\textwidth]{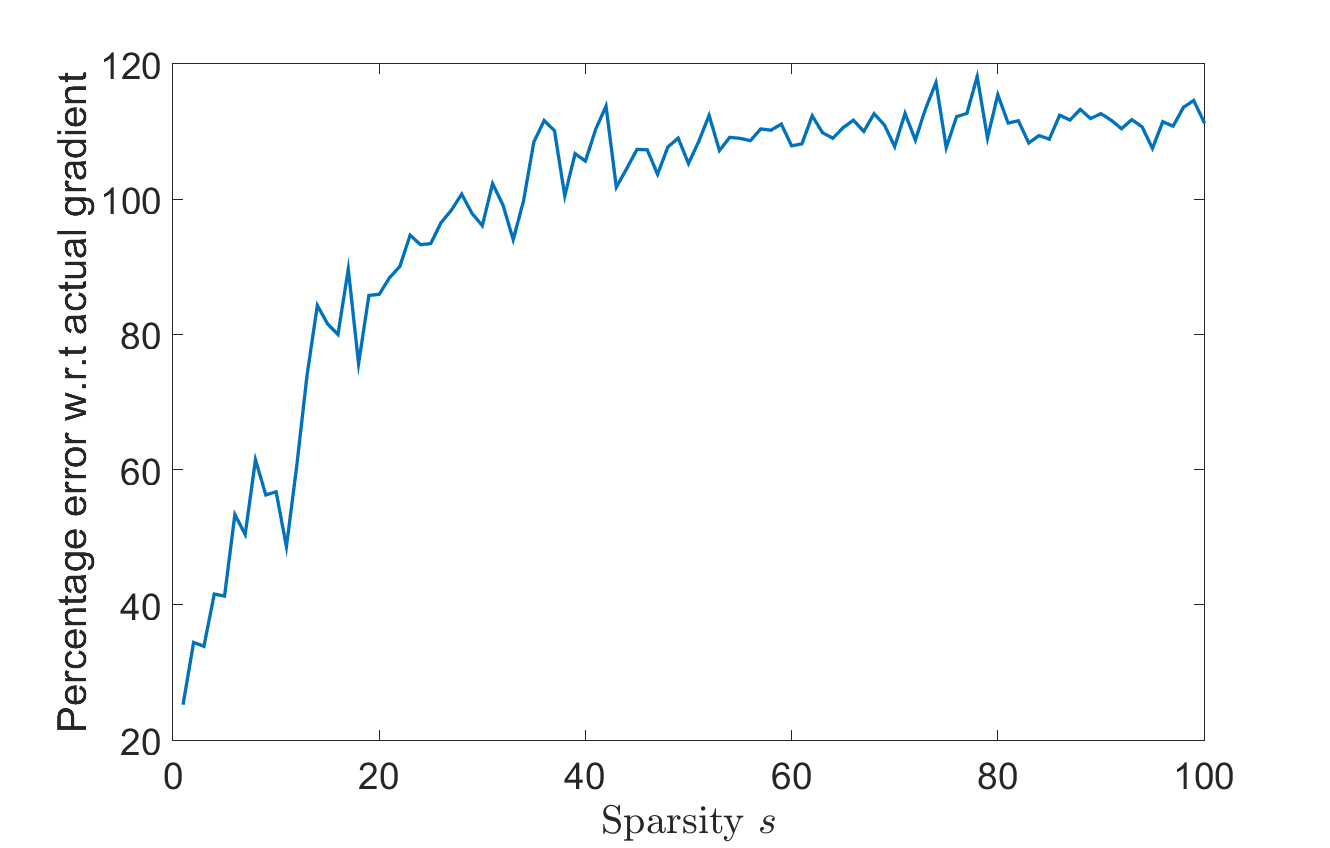} \caption{Performance of the proposed algorithm with variation in sparsity.}\label{fig : s} \end{figure}


\subsection{Manifold Learning: Estimating e.d.r.\ space}
Consider the following semi-parametric model
$$ Y = f(X) + \epsilon$$
where $\epsilon$ is noise and $f$ is a smooth function $\mathbb{R}^n \mapsto \mathbb{R}^m$ of the form $f(X) = g(b_1^TX, \ldots, b_d^T X)$. Define by $B$ the matrix $(b_1, b_2, \ldots, b_d)^T$. $B$ maps the data to a $d$-dimensional relevant subspace. This means that the function $f$ depends on a subspace of smaller dimension given by Range$(B)$ (Note that this is essentially the local view in manifold learning : $B$ can vary with location.). The vectors or the directions given by the vectors $b_i$ are called the effective dimension reducing directions or e.d.r. The question is: how to find the matrix $B$? It turns out that if $f$ doesn't vary  in some direction $v$, then $v \in Null(E_X[G])$ where $G$ is the gradient outer product matrix defined as
$$ G = [[ G_{ij} ]] \; \; \mbox{where} \; G_{ij} = \Big \langle \frac{\partial f}{\partial x_i}(X), \frac{\partial f}{\partial x_j}(X) \Big \rangle$$
and $E_X[ \ \cdot \ ]$ denotes the expectation over $X$. Lemma 1 from \cite{learningrad1} stated below implies that to find the e.d.r.\ directions it is enough to compute $E_X[G]$.
\begin{lemma}
Consider the semi-parametric model
\begin{equation} \label{edr}
Y = g(b_1^TX, \ldots, b_d^T X) + \epsilon,
\end{equation}
where $\epsilon$ represents zero mean finite variance noise. Then the espected gradient outer product (EGOP) matrix $G$ is of rank at most $d$. Furthermore, if $\{v_1, \ldots, v_d \}$ are the eigenvectors associated to the nonzero eigenvalues of $G$, following holds: $$ Span(B) = Span(v_1, \ldots, v_d). $$
\end{lemma}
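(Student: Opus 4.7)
The plan is to reduce the expected gradient outer product to a sandwiched form via the chain rule and then read off both the rank bound and the range identification from elementary linear algebra.

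First, I would set up notation. Write $B = (b_1,\ldots,b_d)^T \in \mathbb{R}^{d\times n}$, so that $f(X) = g(BX)$ with $g:\mathbb{R}^d\to\mathbb{R}$. The chain rule gives
$$\nabla f(X) \;=\; B^T \,\nabla g(BX) \;\in\; \mathbb{R}^n,$$
so for every $X$, the gradient $\nabla f(X)$ lies in $\mathrm{Range}(B^T) = \mathrm{Span}(b_1,\ldots,b_d)$. Interpreting the inner product in the definition $G_{ij} = \langle \partial f/\partial x_i(X),\partial f/\partial x_j(X)\rangle$ as the $L^2(X)$ inner product (i.e.\ expectation over $X$), I get the matrix identity
$$G \;=\; \mathbb{E}_X\!\bigl[\nabla f(X)\nabla f(X)^T\bigr] \;=\; B^T H B, \qquad H := \mathbb{E}_X\!\bigl[\nabla g(BX)\,\nabla g(BX)^T\bigr],$$
where $H\in\mathbb{R}^{d\times d}$. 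This is the key algebraic step; everything else is bookkeeping.

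Second, the rank bound is immediate: $\mathrm{rank}(G) = \mathrm{rank}(B^T H B) \le \mathrm{rank}(H) \le d$, which proves the first claim.

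Third, for the span statement I would argue as follows. Since $G$ is symmetric and positive semidefinite, $\mathrm{Range}(G)$ is exactly the span of the eigenvectors associated with its nonzero eigenvalues, i.e.\ $\mathrm{Span}(v_1,\ldots,v_d)$ (here I am using the implicit nondegeneracy assumption of the model: the $b_i$ are linearly independent so that $B$ has full row rank $d$, and $H\succ 0$, which is what it means for $g$ to depend genuinely on each of its $d$ arguments --- without this there would be fewer than $d$ nonzero eigenvalues and the statement as written would not apply). From $G=B^T H B$ I get the containment $\mathrm{Range}(G)\subseteq\mathrm{Range}(B^T)=\mathrm{Span}(b_1,\ldots,b_d)$. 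Conversely, since $B$ has rank $d$ and $H$ is invertible, $\mathrm{rank}(G)=\mathrm{rank}(B^T H B)=d$, so the dimensions of $\mathrm{Range}(G)$ and $\mathrm{Span}(b_1,\ldots,b_d)$ agree, forcing equality. Combining with the first step yields $\mathrm{Span}(B)=\mathrm{Span}(v_1,\ldots,v_d)$.

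The main ``obstacle'' is really only a modeling caveat rather than a technical difficulty: the clean equality of spans requires the nondegeneracy $H\succ 0$ and linear independence of the $b_i$'s. Without these, one still has $\mathrm{Span}(v_1,\ldots,v_r)\subseteq\mathrm{Span}(B)$ for $r=\mathrm{rank}(G)$, but not the reverse inclusion. Once these natural conditions are in force, the argument is a two-line chain-rule computation plus a rank count, with no genuine analytic content.
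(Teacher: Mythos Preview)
Your argument is correct: the chain rule gives $\nabla f(X)=B^T\nabla g(BX)$, whence $G=B^T H B$ with $H=\mathbb{E}_X[\nabla g(BX)\nabla g(BX)^T]$, and the rank bound and span identification follow by elementary linear algebra under the natural nondegeneracy hypotheses you spell out. You are also right to flag that the equality of spans needs $B$ of full row rank and $H\succ 0$; without these one only gets the inclusion $\mathrm{Range}(G)\subseteq\mathrm{Span}(b_1,\ldots,b_d)$.

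As for comparison with the paper: the paper does not supply its own proof of this lemma. It is quoted verbatim as ``Lemma~1 from \cite{learningrad1}'' and used as a black box to motivate estimating the EGOP. So there is no alternative argument to contrast with; your proof is the standard one and would serve perfectly well as the missing justification.
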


\ \\

Clearly, calculating $E_X[G(X)]$ is computationally heavy. We therefore try to estimate this matrix. Several methods are known for estimating the EGOP and this has been a very popular problem in statistics for a while. The idea of using EGOP for obtaining e.d.r.\ originated in \cite{slicedregression}. While there are other methods based on inverse regression etc., most of the efforts have been directed towards getting an efficient way to estimate gradients in order to finally estimate EGOP (See \cite{adaptiveEDR}). In \cite{learninggrad}, the authors use their method of gradient estimation for this purpose. The idea is to use sample observations $\{f(x_i)\}$ for $\{x_i\}$ in a neighborhood of the given point $x$ and minimize over $z$ the error $$\frac{1}{n^2} \sum_{i, j = 1}^n w_{ij} [y_i - f(x_j) - \langle z, (x_i - x_j )\rangle]^2,$$ where  $w_{ij} \geq 0$ are weights (`kernel') that favor locality $x_i \approx x$ and are typically Gaussian, with regularization in a reproducing kernel Hilbert space (RKHS). The minimizer then is the desired estimate. In \cite{EPOG} a rather simple rough estimator using directional derivative along each coordinate direction is provided. The authors demonstrate that for the purpose of finding e.d.r., a rough estimate such as theirs suffices. We also propose a method via gradient estimation. Take $\widehat{G}$ to be the matrix defined by
$$\widehat{G}_{ij} = \Big \langle \widetilde{\frac{\partial f}{\partial x_i}}, \widetilde{\frac{\partial f}{\partial x_j}} \Big \rangle.$$
In other words, $\widehat{G} = \widetilde{\nabla f} \widetilde{\nabla f}^T$, where $\widetilde{\nabla f}$ denotes the estimate of $\nabla f$ obtained by algorithm 1. We impose our previous restrictions on $f$. That is, the function evaluations at any point are expensive and the gradient of $f$ is sparse. In this case we propose an estimate for $E_X[G]$ by the mean of $\widehat{G}$ over a sample of $r$ points given by the set $\chi = \{ (x_i, f(x_i)) \}_{1 \leq i \leq r}$. By $\langle \ \cdot \ \rangle$, we shall denote the empirical mean over the sample set $\chi$. Thus,

\begin{equation*}
\langle G(X)\rangle = \frac{1}{r} \sum\limits_{x_i \in \chi} \nabla f(x_i) \nabla f(x_i)^T
\end{equation*}
and
\begin{equation*}
\langle\widehat{G}(X)\rangle = \frac{1}{r} \sum\limits_{x_i \in \chi} \widetilde{\nabla f}(x_i) \widetilde{\nabla f}(x_i)^T.
\end{equation*}

\begin{theorem}
Let $f: \mathbb{R}^n \mapsto \mathbb{R}^k$ from the semi-parametric model in~\eqref{edr} be a continuously differentiable function with bounded sparse gradient. Then, for $0 < \epsilon << \frac{1}{m}$ and some $\tau > 0$, with probability at least $1 - \epsilon m $,
\begin{equation}
\left\| E_X[G] - \langle \widehat{G}\rangle \right\|_2 < \frac{6R^2}{\sqrt{r}} \left( \sqrt{\ln n} + \sqrt{\ln \frac{1}{\epsilon}} \right) + \frac{2t}{\tau} \left( \frac{2t}{\tau} + 2R \right)
\end{equation}
where $r$ is the sample size, $R$ is such that $\| \nabla f \|_2 \leq R$, $t$ is as in Theorem~\ref{mainthm} and $m \in \mathbb{N}$ is such that it satisfies the bound in~\eqref{mbound}.
\end{theorem}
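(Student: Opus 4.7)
The plan is to split the target error by the triangle inequality into a sampling-concentration part and a gradient-approximation part:
\begin{equation*}
\bigl\| E_X[G] - \langle \widehat{G}\rangle \bigr\| \;\leq\; \underbrace{\bigl\| E_X[G] - \langle G\rangle \bigr\|}_{\text{(I): sampling error}} \;+\; \underbrace{\bigl\| \langle G\rangle - \langle \widehat{G}\rangle \bigr\|}_{\text{(II): gradient-estimation error}} .
\end{equation*}
The first term should give rise to the $\tfrac{6R^{2}}{\sqrt{r}}(\sqrt{\ln n}+\sqrt{\ln(1/\epsilon)})$ piece and the second to the $\tfrac{2t}{\tau}\bigl(\tfrac{2t}{\tau}+2R\bigr)$ piece of the claimed bound.

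For (I), the summands $\nabla f(x_i)\nabla f(x_i)^T$ are i.i.d.\ symmetric rank-one matrices, each bounded in operator norm by $\|\nabla f(x_i)\|^2\leq R^{2}$. I would apply a matrix concentration inequality of Hoeffding/Bernstein type for sums of bounded independent symmetric matrices in dimension $n$, which gives a tail of the form $P(\|\cdot\|\geq t)\leq 2n\exp(-c\,r t^{2}/R^{4})$. Inverting at level $\epsilon$ and using $\sqrt{a+b}\leq \sqrt{a}+\sqrt{b}$ to separate $\ln(2n/\epsilon)$ into $\ln n$ and $\ln(1/\epsilon)$ (with the implicit constants absorbed into the coefficient $6$) yields the first summand of the theorem.

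For (II), I would use the outer-product identity
\begin{equation*}
aa^{T} - bb^{T} \;=\; (a-b)\,a^{T} + b\,(a-b)^{T},
\end{equation*}
applied pointwise with $a=\nabla f(x_i)$ and $b=\widetilde{\nabla f}(x_i)$. Taking Frobenius/operator norm gives
\begin{equation*}
\bigl\| \nabla f(x_i)\nabla f(x_i)^{T} - \widetilde{\nabla f}(x_i)\widetilde{\nabla f}(x_i)^{T}\bigr\| \;\leq\; \|\nabla f(x_i)-\widetilde{\nabla f}(x_i)\|\,\bigl(\|\nabla f(x_i)\|+\|\widetilde{\nabla f}(x_i)\|\bigr).
\end{equation*}
By Theorem~\ref{mainthm}, $\|\nabla f(x_i)-\widetilde{\nabla f}(x_i)\|<\tfrac{2t}{\tau}$ with probability $\geq 1-\epsilon m$, so $\|\widetilde{\nabla f}(x_i)\|\leq R+\tfrac{2t}{\tau}$ on this event, and the per-sample error is at most $\tfrac{2t}{\tau}(\tfrac{2t}{\tau}+2R)$. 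Averaging over the $r$ sample points (via the triangle inequality on $\langle\cdot\rangle$) preserves this bound.

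The main obstacle I expect is the probability bookkeeping: the gradient approximation at a single point holds with probability $\geq 1-\epsilon m$, so a naive union bound over the $r$ sample points inflates the failure probability to $r\epsilon m$, and the concentration in (I) also costs at least an $\epsilon$. The clean statement $1-\epsilon m$ can be recovered by re-parametrising $\epsilon$ in Theorem~\ref{mainthm} by $\epsilon/r$ (which only changes the required number of SP iterations $k$ by a $\log r$ factor and is absorbed into the constants), and by choosing the concentration failure probability of (I) commensurately. Assembling the two bounds on the good event then gives the stated inequality, completing the proof.
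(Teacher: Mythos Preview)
Your proposal is correct and follows essentially the same route as the paper: the same triangle-inequality split into a sampling term handled by a matrix concentration inequality and a per-sample outer-product perturbation term handled via Theorem~\ref{mainthm}. Your outer-product decomposition $aa^T-bb^T=(a-b)a^T+b(a-b)^T$ is a minor variant of the paper's bound $\|vv^T-ww^T\|\leq\|v-w\|\,\|v+w\|$ and yields the identical numerical estimate $\tfrac{2t}{\tau}(\tfrac{2t}{\tau}+2R)$. In fact you are more careful than the paper on the probability bookkeeping: the paper simply asserts that bounding term (II) ``for any single sample point $x$'' suffices and reports the overall probability as $1-\epsilon m$, without addressing the union bound over the $r$ samples or the additional $\epsilon$ spent on term (I); your proposed re-parametrisation $\epsilon\mapsto\epsilon/r$ (absorbed into $k$) is exactly the patch needed to make that step rigorous.
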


The proof closely follows the line of argument in \cite{EPOG}.

\begin{proof} Note that,
$$\| E_X[G(X)] - \langle \widehat{G}(X)\rangle \|_2 \leq \| E_X[G(X)] - \langle G(X)\rangle \|_2 + \| \langle G(X)\rangle - \langle \widehat{G}(X)\rangle \|_2.$$
The idea is to bound each term. We use concentration inequality for sum of random matrices (See Theorem 2.1, lecture 23 \cite{MC} and \cite{RM} for more general results) to claim that for $\epsilon >0$,
$$ \| E_X[G(X)] - \langle G(X)\rangle \|_2 \leq \frac{6R^2}{\sqrt{r}} \left( \sqrt{\ln n} + \sqrt{\ln \frac{1}{\epsilon}} \right)$$
with probability $1-\epsilon$. For the second term, it is enough to show that it is bounded for any single sample point $x$. Observe that for any two vectors $v$ and $w$,
$\| vv^T - ww^T \|_2 \leq \| (v - w)^T (v + w) \|_2 $
Using this we get, for a fixed $x$,
\begin{eqnarray}
\| G(x) - \widehat{G}(x) \|_2 & =& \| \nabla f(x) \nabla f(x)^T - \widetilde{\nabla f}(x) \widetilde{\nabla f}(x)^T \|_2 \nonumber \\
& \leq & \| \nabla f(x) - \widetilde{\nabla f}(x) \|_2 \| \nabla f(x) + \widetilde{\nabla f}(x) \|_2 \nonumber  \\
& \leq & \| \nabla f(x) - \widetilde{\nabla f}(x) \|_2 \left( \| \nabla f(x) - \widetilde{\nabla f}(x) \|_2 + 2 \| \nabla f(x) \|_2 \right) \nonumber \\
& \leq & \frac{2 t}{\tau} \left( \frac{2 t}{\tau} + 2R \right)
\end{eqnarray}
with probability $1 - \epsilon m$, where the last inequality is obtained by applying the bound from Theorem~\ref{mainthm}.
\end{proof}

We now simulate an example to illustrate the decay of the error $\| \langle G(X)\rangle - \langle \widehat{G}(X)\rangle \|_F$ (See Figure~\ref{fig:G}). Consider a function $f:R^{25000} \mapsto R^{3}$ given by:
$f_{i}(x)  = x^{T} M_{i}M^{T}_{i}x $ where, $M_i$ is a $25000$-dimensional vector with $3$ non-zero elements and $f_{i}(x)$ corresponds to the $i^{th}$ dimension of $f(x)$.
A $25000 \times 100$ Gaussian random matrix is used for the compressive sensing part of the algorithm. The plot of percentage in normed error between  $\langle G(X)\rangle$ and $\langle \widehat{G}(X)\rangle$, i.e. $||\langle G(X)\rangle - \langle \widehat{G}(X)\rangle||^{2}_F$ is shown below by varying number of samples $r = 1$ to $25$. Remember that due to the bias at compressive sensing step, we need to average out the gradient estimation error at SP step. This is done in $k = 100$ iterations.

\bigskip

\begin{figure}[H]
\centering
\includegraphics[width = 0.7\textwidth]{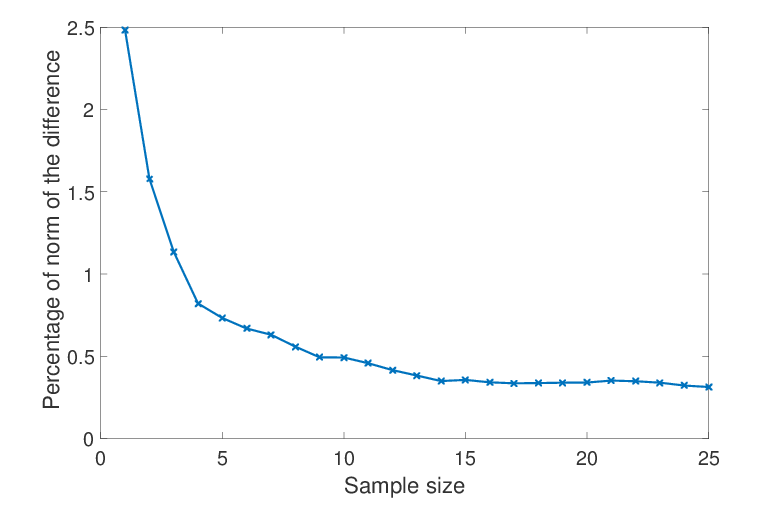}
\caption{Percentage error in $\| \langle G(X)\rangle - \langle \widehat{G}(X)\rangle \|$ with number of samples $r$ varying from $1$ to $25$.} \label{fig:G}
\end{figure}

Learning e.d.r. by estimating the gradient using the method proposed in this paper was compared with the SGL (Sparse Gradient Learning) method proposed in \cite{learninggrad} using the same function as above and an exponential kernel (See \url{http://www2.stat.duke.edu/~sayan/soft.html} for details). Here, $n=10000$ and the measurement matrix is a $10000 \times 100$ Gaussian matrix. The SP step is averaged over $100$ iterations. $100$ samples were considered for the SGL method with the neighborhood radius of of $0.05$. Sparsity of the gradient vector is $30$.


\begin{figure}[H]
\centering
\includegraphics[width = 0.7\textwidth]{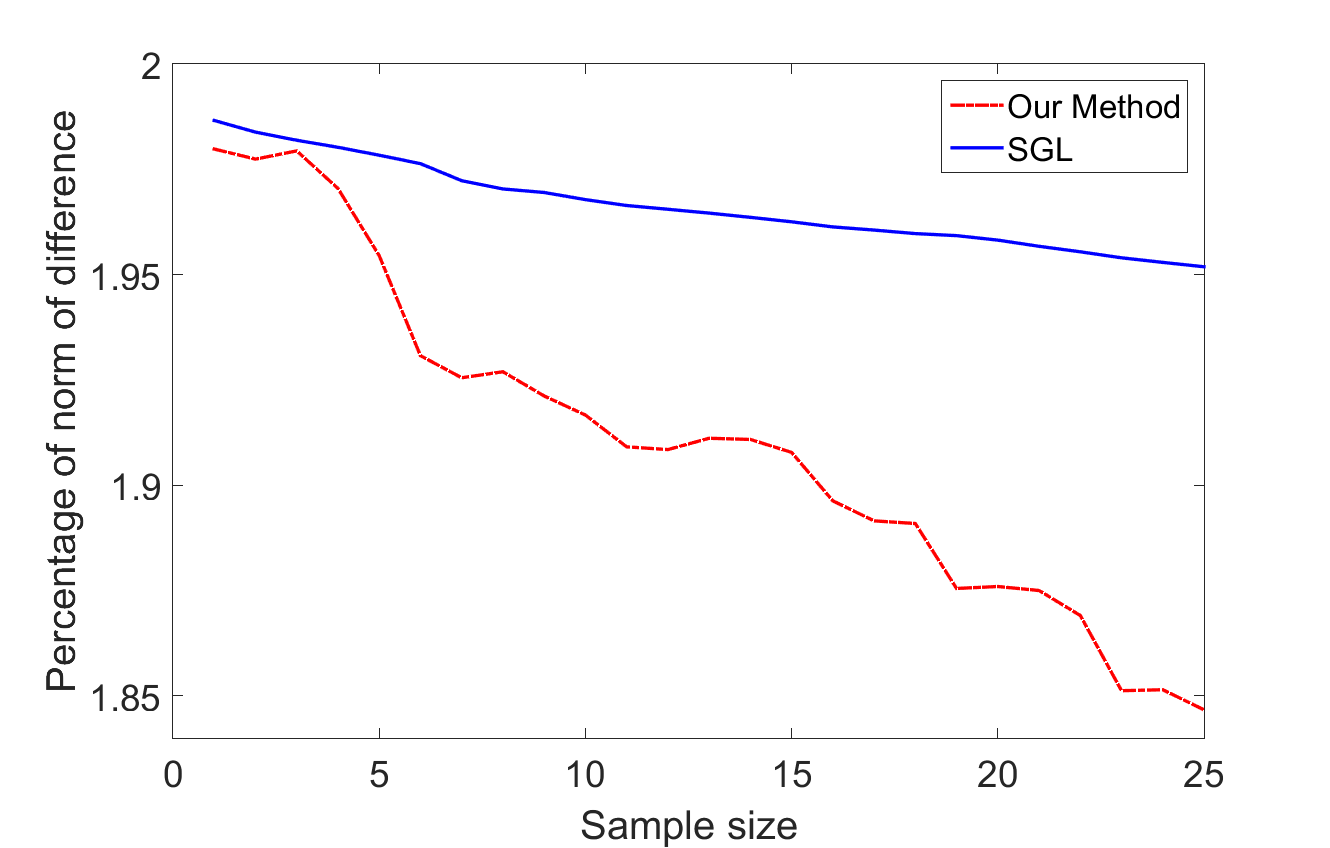}
\caption{Comparison of percentage error of $\| \langle G(X)\rangle - \langle \widehat{G}(X)\rangle \|$ computed by plugging gradient estimates by proposed method vs SGL method.}\label{fig:Gcomp}
\end{figure}



\subsection{Optimization}

We consider next a typical problem of function minimization, but only consider a function with sparse gradient. In other words, we want to minimize $f(x)$ where
$$f : \mathbb{R}^n \mapsto \mathbb{R}$$
is a continuously differentiable real-valued Lipschitz function such that function evaluation at a point in $\mathbb{R}^n$ is typically expensive. We also assume that $n$ is large and that $\nabla f$ is sparse. In addition, we assume that the critical points of $f$ (i.e., the zeros of $\nabla f$) are isolated. (This is generically true unless there is overparametrization.)
The idea is to use the  stochastic gradient scheme (\ref{SAbasic}) with the standard assumptions (\ref{mgbdd}), (\ref{steps}).
 It follows from the theory of stochastic approximation (See \cite{book}, Chapter 2) that under above conditions, the solution of the random difference equation (\ref{SAbasic}) tracks with probability one the trajectory of the solution of a limiting o.d.e.\ as long as the iterates remain bounded, which they do under mild additional conditions on $f$.  Following \cite{book}, Chapter 2, we use this so called `o.d.e.\ approach' which states that the algorithm will a.s.\ converge to the equilibria of the limiting o.d.e., which is
\begin{equation} \label{ODE}
\dot{x}(t) = -\nabla f(x(t)).
\end{equation}
For this, $f$ itself serves as the Lyapunov function, leading to the conclusion that the trajectories of (\ref{ODE}) and therefore a.s., the iterates of (\ref{SAbasic}) will converge to one of its equilibria, viz., the critical points of $f$. In fact under additional conditions on the noise, it will converge to a (possibly random) stable equilibrium thereof (\textit{ibid.}, Chapter 4).


\


 The stochastic gradient scheme requires $\nabla f(x)$ at each iteration. The problem, as noted,  often is the unavailability of $\nabla f(x)$. It is therefore important to have a good method for estimating the gradient. Typically one would obtain noisy measurements and hence the estimate will have a non-zero error $\eta$. It is known that if the error remains small, the iterates converge a.s.\ to a small neighbourhood of some point in the set of equilibria of  \eqref{ODE}. We analyze the resultant error below.  Also,  the error obtained in SP is zero-mean modulo higher order terms, so one can even take an empirical average over a few separate estimates in order to reduce variance. For high dimensional problems, the number of function evaluations remains still small as compared with, e.g., the classical Kiefer-Wolfowitz scheme. We use Theorem~\ref{mainthm} to justify using SP (Simultaneous Perturbation Stochastic Approximation) combined with compressed sensing to obtain an approximation for the gradient and then use the above scheme to minimize $f$.

\

\noindent Consider the following stochastic approximation scheme:
\begin{equation} \label{SAgrad}
x_{n+1} = x_n  + a(n)[-\nabla f(x_n) + M_{n+1} + \eta(n)]
\end{equation}
where $\{ \eta(n) \}$ is the additional error arising due to the error in gradient estimation. That is, $\widetilde{\nabla f}(x_n) = \nabla f(x_n) - \eta(n)$. If $\sup_n \| \eta(n) \| < \epsilon_0$ for some small $\epsilon_0$, then the iterates of \eqref{SAgrad} converge to a small neighbourhood $A$ of some point $x^*$ in $H = \{x: \nabla f(x) = 0 \}$ (See \cite{SA3} and chapter 10 of \cite{book}). This is ensured by a Lyapunov argument as follows.  The limiting o.d.e.\ is of the form
$$\dot{x}(t) = -\nabla f(x(t)) + \breve{\eta}(t)$$
for some measurable $\breve{\eta}(\cdot)$ with $\|\breve{\eta}(t)\| \leq \epsilon_0 \ \forall t$. Then
 $$\frac{d}{dt}f(x(t)) = -\|\nabla f(x(t))\|^2 + \langle\nabla f(x(t)), \breve{\eta}(t)\rangle,$$
 which is $< 0$ as long as $\|\nabla f(x(t))\|^2 > |\langle\nabla f(x(t)), \breve{\eta}(t)\rangle |$. Therefore $x(t)$ will converge to the set
 $$\{x : \|\nabla f(x)\| \leq \epsilon_0\}.$$
  Assume that the Hessian $\nabla^2f(x^*)$ is positive definite, which is generically so for isolated local minima. Then for $A$ small enough, the lowest eigenvalue $\lambda_m(x)$ of $\nabla^2f(x)$ for $x \in A$ is $> 0$. By mean value theorem, $\nabla f(x) = \nabla^2 f(x')(x - x^*)$ for some $x' \in A$, so $\|\nabla f(x)\| \geq \lambda_m(x')\|x - x^*\|$. Thus there is convergence to a ball of radius $\frac{\epsilon_0}{\lambda_m}$ around $x^*$. (A statement to this result without the estimate on the radius of the ball is contained in Theorem 1 of \cite{SA1}.) Thus we have:\\

\begin{theorem}
The stochastic gradient scheme
$$x_{n+1} = x_n  + a(n)[-\widetilde{\nabla f}(x_n) + M_{n+1}]$$
a.s.\ converges to a ball of radius $O(\epsilon_0)$ centered at some local minimum of $f$,
where $\widetilde{\nabla f}$ is the reconstructed gradient as in Theorem~\ref{mainthm} and $\epsilon_0$ is a bound on $\|\widetilde{\nabla f} - \nabla f\|$.
\end{theorem}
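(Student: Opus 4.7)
The plan is to recognize that the discussion immediately preceding the theorem already contains the ingredients, and to assemble them into a clean proof via the ODE approach to stochastic approximation. The first step is to rewrite the iterate: setting $\eta(n) := \nabla f(x_n) - \widetilde{\nabla f}(x_n)$ puts the scheme in the form \eqref{SAgrad}, and Theorem~\ref{mainthm} supplies a per-iteration high-probability bound $\|\eta(n)\| \leq 2t/\tau$, so one can take $\epsilon_0 = 2t/\tau$. A technical point worth spelling out here is that Theorem~\ref{mainthm} gives this bound only with probability $1 - \epsilon m$ per step; to convert this into the hypothesis $\sup_n \|\eta(n)\| \leq \epsilon_0$ a.s.\ that the earlier paragraph relies on, I would either choose the failure probability $\epsilon = \epsilon(n)$ summable in $n$ and invoke Borel--Cantelli, or redraw the Gaussian matrix $A$ and SP perturbations $\Delta$ independently at each iteration and argue that the exceptional events are asymptotically negligible along the step-size schedule. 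I expect this probabilistic bookkeeping to be the main subtlety; the rest is essentially the standard perturbed-ODE machinery.

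Once uniform control of $\{\eta(n)\}$ is in hand, the second step invokes Chapter 2 of \cite{book} with conditions \eqref{mgbdd}, \eqref{steps}, and boundedness of the iterates (guaranteed by mild growth assumptions on $f$). This yields that $\{x_n\}$ asymptotically tracks, a.s., a trajectory of the perturbed ODE
\[
\dot{x}(t) = -\nabla f(x(t)) + \breve{\eta}(t), \qquad \|\breve{\eta}(t)\| \leq \epsilon_0.
\]
Using $f$ itself as a Lyapunov function, I would then compute
\[
\frac{d}{dt} f(x(t)) = -\|\nabla f(x(t))\|^2 + \langle \nabla f(x(t)),\, \breve{\eta}(t)\rangle \leq -\|\nabla f(x(t))\|\bigl(\|\nabla f(x(t))\| - \epsilon_0\bigr),
\]
which is strictly negative outside the set $S_{\epsilon_0} := \{x : \|\nabla f(x)\| \leq \epsilon_0\}$. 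Combined with the isolated-critical-points hypothesis, LaSalle-type reasoning forces trajectories into a single connected component of $S_{\epsilon_0}$ around some $x^* \in H$; transferring back via the tracking lemma gives a.s.\ convergence of $x_n$ into the same component.

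Finally, I would convert the gradient sublevel set $S_{\epsilon_0}$ into a geometric ball around $x^*$. At a local minimum where $\nabla^2 f(x^*) \succ 0$, continuity produces a neighborhood $A$ of $x^*$ on which the smallest eigenvalue of $\nabla^2 f$ is bounded below by some $\lambda_m > 0$. The mean value theorem applied to $\nabla f$ (noting $\nabla f(x^*) = 0$) yields $\nabla f(x) = \nabla^2 f(x')(x - x^*)$ for some $x'$ on the segment from $x^*$ to $x$, hence $\|\nabla f(x)\| \geq \lambda_m \|x - x^*\|$ for $x \in A$. Intersecting $S_{\epsilon_0}$ with $A$ then lies inside the ball of radius $\epsilon_0/\lambda_m = O(\epsilon_0)$ around $x^*$, which is exactly the stated conclusion. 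The only points requiring genuine care are the uniformization of the compressive-sensing error across iterations flagged above and the standard verification that the iterates remain bounded so that the ODE approach applies.
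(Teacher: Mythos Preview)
Your proposal is correct and follows essentially the same route as the paper: the paper's proof is a single sentence pointing back to the perturbed-ODE discussion immediately preceding the theorem together with Theorem~6 of \cite{book}, and your write-up is a fleshed-out version of precisely that discussion (rewrite as \eqref{SAgrad}, track the perturbed ODE, Lyapunov argument with $f$, then the Hessian/mean-value estimate to get the $O(\epsilon_0)$ ball). If anything, you are more careful than the paper about one point it leaves implicit: the paper simply takes $\epsilon_0$ as a bound on $\|\widetilde{\nabla f}-\nabla f\|$ without explaining how the per-step high-probability guarantee of Theorem~\ref{mainthm} is upgraded to $\sup_n\|\eta(n)\|\leq\epsilon_0$ a.s., whereas you flag this and sketch the Borel--Cantelli route.
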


\noindent \textit{Proof}  The claim is immediate from the above observations about the perturbed differential equation and  Theorem 6, pp.\ 58-59, \cite{book}. \hfill $\Box$

\

Observe that we have only discussed asymptotic convergence above. For  real-life optimization problems, however, we must ensure that the scheme in \eqref{SAgrad} converges to a neighbourhood of $x^*$ in finite time. This is indeed true and recent concentration-type results (See \cite{SA2}, \cite{SA4}) strengthen the theoretical basis for plugging $\widetilde{\nabla f}(x)$ in place of $\nabla f(x)$ in stochastic gradient descent schemes. The results in \cite{SA2} involve estimates on lock-in probability, i.e., the probability of convergence to a stable equilibrium given that the iterates visit its domain of attraction. An estimate on the number of steps needed to be within a prescribed neighborhood of the desired limit set with a prescribed probability is also obtained. Specifically, the result states that if the $n_0$th iterate is in the domain of attraction of a stable equilibrium $x^*$, then after a certain number of additional steps, the iterates remain in a small tube around the differential equation trajectory converging to $x^*$ with probability exceeding
$$1 - \mbox{O}\left(e^{-\frac{C}{(\sum_{m = n_0}^{\infty}a(m)^2)^{\frac{1}{4}}}}\right),$$
\textit{ipso facto} implying an analogous claim for the probability of remaining in a small neighborhood of $x^*$ after a certain number of iterates. We refer the reader to \cite{SA2} for details. In \cite{SA4}, an improvement on this estimate is proved under additional regularity conditions on $\nabla f$ (twice continuous differentiability) using Alekseev's formula. We have omitted the details of both the cases as it needs much additional notation to replicate them here. These would, however, apply to the exact stochastic gradient descent. Since we have an additional error due to approximate gradient as in the preceding theorem, we need to combine the results of \textit{ibid.}  with the above theorem to make a weaker claim regarding how small the neighborhood of $x^*$ in question can be. Furthermore, these claims are about iterates which are in the domain of attraction of a stable equilibrium. This, however, is not a problem, as `avoidance of traps' results as in section 4.3 of \cite{book} (see also \cite{Benaim}, \cite{Bran}, \cite{Pem}) ensure that if the noise is rich enough in a certain precise sense, unstable equilibria are avoided with probability one.

\begin{remark}
Note that the gradient descent  is a stochastic approximation scheme which itself averages out the noise. So in principle  the averaging over $k$ steps at the SP stage in the original algorithm can be skipped. This means that for a stochastic gradient descent scheme, we cut down the cost of function evaluation even further. The simulations in the next section confirm that good results are obtained without averaging over SP iterations. There is, however, a standard trade-off involved between per step computation / speed of convergence, and fluctuations (equivalently, variance) of the estimates: any additional averaging improves the latter at the expense of the former.
\end{remark}

\

\subsection{ Numerical experiments}

We compare following three algorithms.

\

\begin{enumerate}

\item \textit{\large Actual Gradient Descent}

This is the classical stochastic gradient descent with exact gradient.

\

\begin{algorithm}\label{GDA}
\caption{Stochastic Gradient Decent with Compressive Sensing}\label{basic}
\vspace{0.05in}
{\bf Initialization:}
\vspace{0.05in}
\\ $x(0) = x_{initial},  A \gets random\ Gaussian\ matrix $\\
$ a(n)$ be a sequence that satisfies the properties of stepsize listed above. \\

{\bf Iteration:} \it Repeat until convergence criteria is met at $ n = n^\#$. At \ $n^{th}$ iteration:
\vspace{0.05in}
\begin{algorithmic}
\State ${y}(n) \ \  \ \ \  \  \gets A{\nabla f}(x(n))$ + \mbox{error}
\State $\widetilde{\nabla f}(x(n)) \gets l_1-\mbox{ minimization with} \ Homotopy(y(n),A)$
\State $x(n+1) \ \gets x(n) - a(n)[ \widetilde{\nabla f}(x(n))]$
\end{algorithmic}
\vspace{0.05in}
\bf Output: $Approximate \ minimizer\ of\ f\ i.e.\ x(n^\#)$
\end{algorithm}

\

\

\item \textit{\large Accelerated Gradient Method}

Accelerated gradient scheme was proposed by Nesterov \cite{AGS}. While Gradient Descent algorithm has a rate of convergence of order $1/s$ after $s$ steps, Nesterov's method achieves a rate of order $1/s^2$. We implement the method here to achieve an improvement in the time complexity further. The idea is to replace the $n^{th}$ iteration above by the following.

\

\begin{algorithm} \label{AGDA}
\vspace{0.05in}
At $ n^{th}$ iteration:
\vspace{0.05in}
\begin{algorithmic}
\State ${y}(n) \ \  \ \ \  \  \gets A{\nabla f}(x(n))$ + \mbox{error}
\State $\widetilde{\nabla f}(x(n)) \gets l_1-\mbox{ minimization with} \ Homotopy(y(n),A)$
\State $ z(n+1) \ \gets \ x(n) - a(n) [\widetilde{\nabla f}(x(n)) $
\State $ x(n+1) \ \gets \  (1 - \gamma(n)) z(n+1) + \gamma(n) z(n) $
\end{algorithmic}
\vspace{0.05in}
where, $\lambda$ and $\gamma$ are as follows:
\begin{equation*}
\lambda(0) = 0, \ \lambda(n) = \frac{1 + \sqrt{1+ 4 \lambda^{2}(n-1)}}{2}, \ \text{and} \ \gamma(n) = \frac{1 - \lambda(n)}{\lambda(n+1)}.
\end{equation*}
\end{algorithm}

This gives us faster convergence towards the minimum.

\item \textit{\large Adaptive Method}

Another way to achieve a faster convergence rate is to perform the $l_1$-minimization adaptively with the gradient descent. The idea is to again use the homotopy method for $l_1$-minimization but this part of the algorithm is run for very few iterations. The intermediate approximation of $\nabla f$ is then used for performing the stochastic gradient descent. As expected, the errors are high in the beginning but the convergence is faster.

\

\noindent We consider the following function to test our algorithms:
\begin{equation} \label{fn} f(x) = (x^T M_{1}^T M_{1} x)^3 +(x^T M_{2}^T M_{2} x)^2+x^T M_{1}^T M_{2} x \end{equation}
where, function $f: \mathbb{R}^{n} \mapsto \mathbb{R}$ and $M_{1},M_{2}$ are $n \times 3$ random matrices. This is to ensure sparsity of the gradient. Here, $n=25000$ and number of non-zero entries in each column of $M_1$ and $M_2$ are $s=3$. Number of measurements, $m=50$. $A$ is a $25000 \times 50$ random Gaussian matrix.\\

Figure~\ref{fig:SGL} and \ref{fig:SGLsmalln} show the comparisons between various algorithms described above for the same function.

\end{enumerate}
\begin{figure}[H]
\centering
\includegraphics[width = 0.6\textwidth]{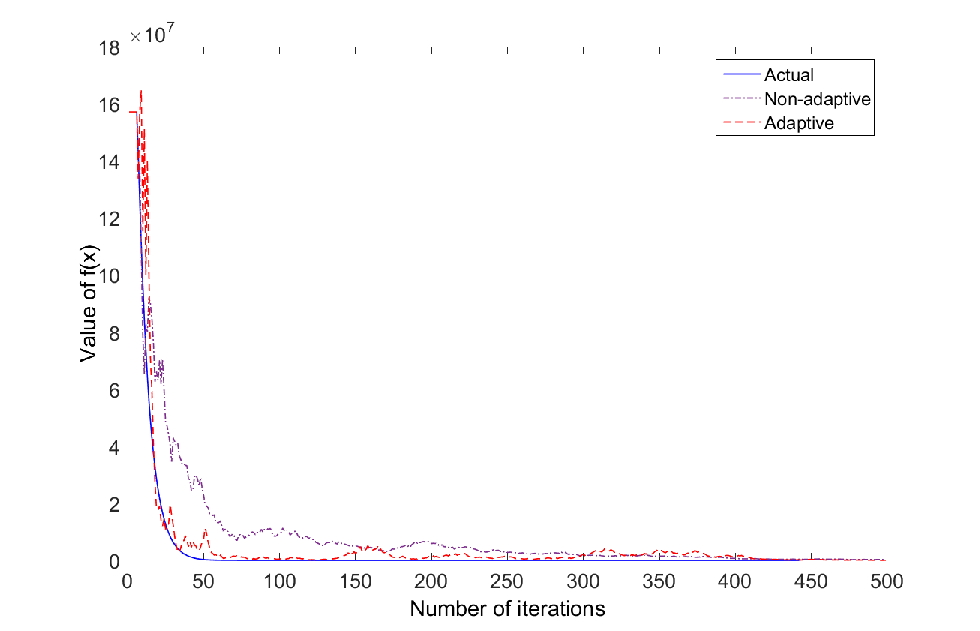}
\caption{Comparison of Gradient descent with actual gradient $\nabla f$ and estimated gradient $\widetilde{\nabla f}$
using non-adaptive and adaptive schemes.}\label{fig:SGL}
\end{figure}

\begin{figure}[H]
\centering
\includegraphics[width = 0.6\textwidth]{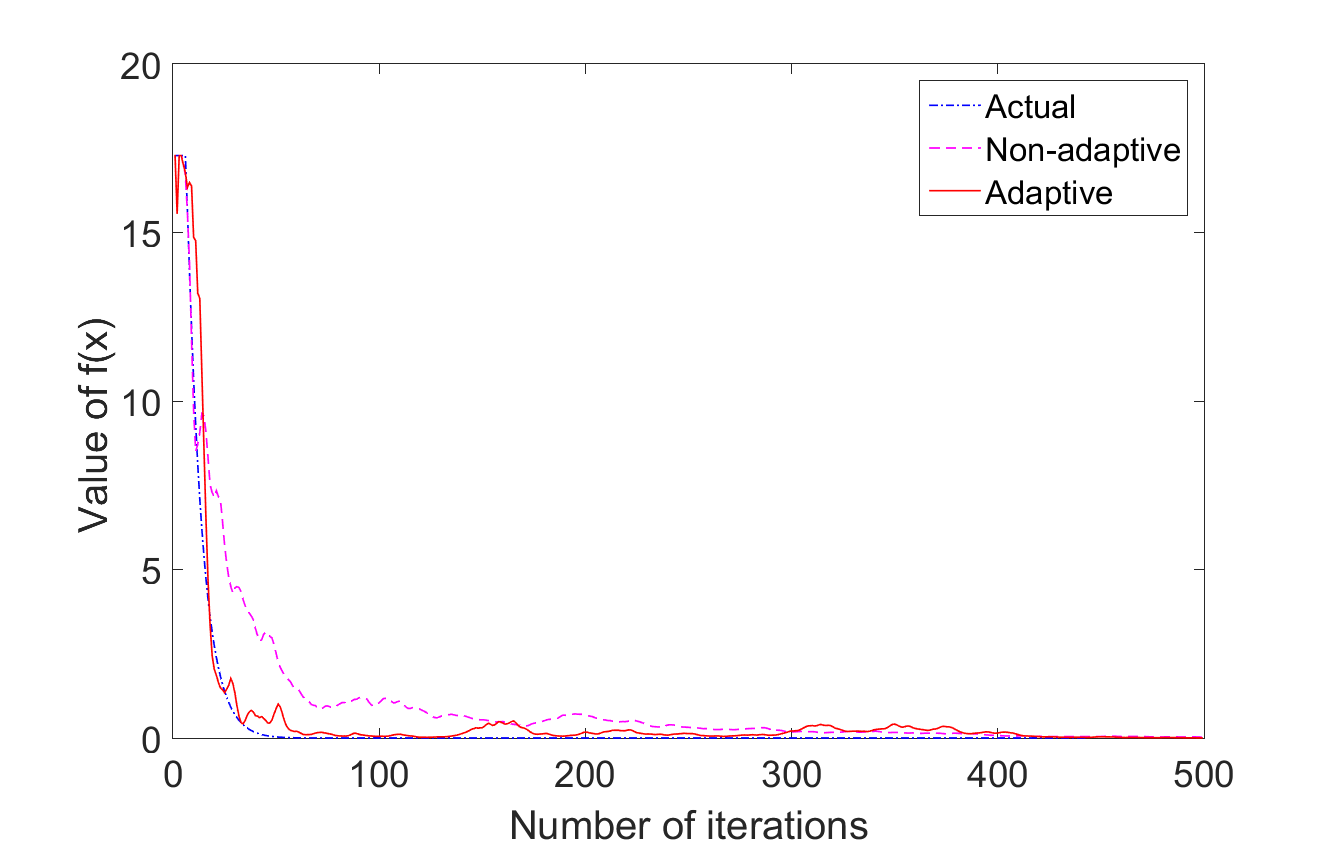}
\caption{A scaled down version of Figure~\ref{fig:SGL} for n=10000.} \label{fig:SGLsmalln}
\end{figure}

\bigskip

\begin{center}
\begin{tabular}{|c|c|c|c|}
\hline
\text{Algorithm Used} & \text{Time taken in Sec for n = 1000} &\text{n = 10000} & \text{ n = 25000} \\ \hline
\text{Adaptive Method} & 5.910 &30.4& 39\\
\text{With out Adaptive} & 50.572 &213.6& 489\\
\text{Actual Gradient Method} & 171.150 &5873.8& ~9000\\
\hline
\end{tabular}
\end{center}

As expected, adaptive method turns out to be faster compared to the non-adaptive method which in turn is much faster than the algorithm that computes actual gradients. Incidentally, the classical scheme all but converges in under 400 iterations. Even so it takes more time than the other two which take more iterations. This is because of the heavy per iterate computation  for the classical scheme. From the above table it is clear that as the dimensionality of the problem increases, adaptive method proves more and more useful compared to the other two algorithms.

We compared our method with the method proposed in \cite{learninggrad}. The function in \eqref{fn} is used for the comparison. Here, $n=10000, s=50$. Number of samples of SGL were $10$, chosen with neighbourhood radius $0.05$.

\begin{figure}[H]
\centering
\includegraphics[width = 0.6\textwidth]{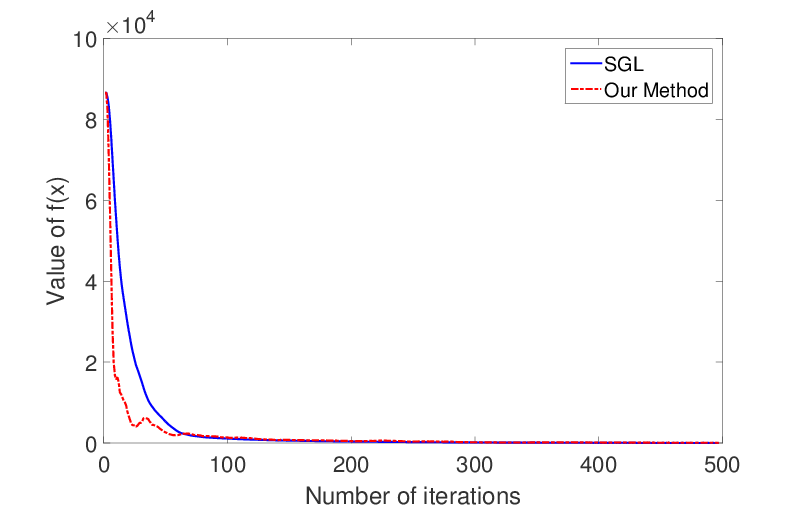}
\caption{Comparison of Gradient descent using estimated gradient from proposed method vs the SGL method proposed in \cite{learninggrad}.}
\end{figure}

In the following scaled down version $n=10000, s=100$ and $m=500$.
\begin{figure}[H]
\centering
\includegraphics[width = 0.6\textwidth]{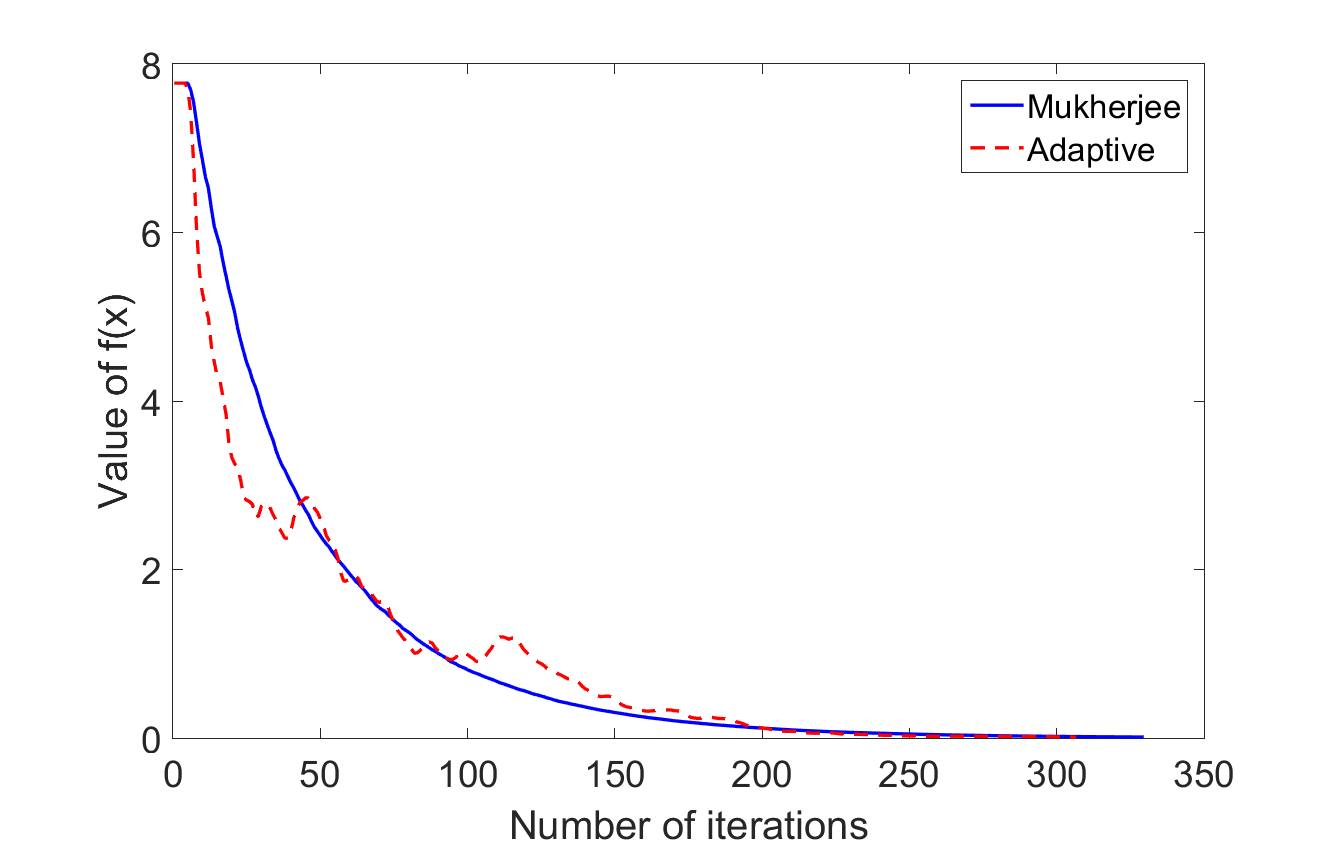}
\caption{Comparison of Gradient descent using estimated gradient from proposed method vs the SGL method proposed in \cite{learninggrad}.}
\end{figure}

The time taken by the SGL method and our method was $993$ and $83$ seconds respectively. As mentioned earlier, the aim of this paper is to provide a good estimation of gradient when the function evaluations are expensive. In such cases, our method would provide a significant gain in terms of  function evaluations needed. While in this example we do see a significant improvement in time taken for the estimation, there is no a priori reason to always expect it. It will indeed be the case when the function evaluations are `expensive' in terms of the time they take. One expects this to be so when the ambient dimension is high.


\section{Concluding remarks}
We have proposed an estimation scheme for gradient  in high dimensions that combines ideas from Spall's SPSA with compressive sensing and thereby tries to economize on the number of function evaluations. This has theoretical justification by the results of \cite{TA}. Our method can be extremely useful when the function evaluation is very expensive, e.g., when a single evaluation is the output of a long simulation. This situation does not seem to have been addressed much in literature. In very high dimensional problems with sparse gradient, computing estimates for partial derivatives in every direction is inefficient because of the large number of function evaluations needed. SP simplifies the problem of repeated function evaluation by concentrating on a single \textit{random} direction at each step. When the gradient vectors in such cases live in a lower dimensional subspace, it also makes sense to exploit ideas from compressive sensing. We have computed the error bound in this case and have also shown theoretically that this kind of estimation of gradient works well with high probability for the gradient descent problems and in other high dimensional problems such as estimating EGOP in manifold learning where gradients are actually low-dimensional and gradient estimation is relevant. Simulations show that our method works much better than pure SP.

\bigskip


\section*{Acknowledgment}
We thank GPU Centre of Excellence, IIT Bombay for providing us with the facility to carry out simulations and Prof.\ Chandra Murthy of Indian Institute of Science for helpful discussions regarding compressive sensing.

\bigskip


\end{document}